\date{}
\author{\theauthor}
\title{\thetitle} 
\newcommand{\theauthor}{}
\newcommand{\thetitle}{LS-Tree: Model Interpretation When the Data Are Linguistic}
\newcommand{\powerset}{\raisebox{.15\baselineskip}{\Large\ensuremath{\wp}}}
\long\def\@makecaption#1#2{
        \vskip 0.8ex
        \setbox\@tempboxa\hbox{\small {\bf #1:} #2}
        \parindent 1.5em  
        \dimen0=\hsize
        \advance\dimen0 by -3em
        \ifdim \wd\@tempboxa >\dimen0
                \hbox to \hsize{
                        \parindent 0em
                        \hfil 
                        \parbox{\dimen0}{\def\baselinestretch{0.96}\small
                                {\bf #1.} #2
                                } 
                        \hfil}
        \else \hbox to \hsize{\hfil \box\@tempboxa \hfil}
        \fi
        }
\newcommand\footnoteref[1]{\protected@xdef\@thefnmark{\ref{#1}}\@footnotemark}
\newtheorem{theorem}{Theorem}
\renewcommand*{\@fnsymbol}[1]{\ensuremath{\ifcase#1\or
    \dagger\or \dagger\or \ddagger\or \mathsection\or
    \mathparagraph\or \|\or **\or \dagger\dagger \or \ddagger\ddagger
    \else\@ctrerr\fi}} \makeatother
\begin{document}

\author{ Jianbo Chen  \hspace{3mm} Michael I. Jordan} 

    \date{University of California, Berkeley } 

  \maketitle







\begin{abstract}
We study the problem of interpreting trained classification models in the setting of linguistic data sets. Leveraging a parse tree, we propose to assign least-squares based importance scores to each word of an instance by exploiting syntactic constituency structure. We establish an axiomatic characterization of these importance scores by relating them to the Banzhaf value in coalitional game theory. Based on these importance scores, we develop a principled method for detecting and quantifying interactions between words in a sentence. We demonstrate that the proposed method can aid in interpretability and diagnostics for several widely-used language models.
\end{abstract}


\setlength{\abovedisplayskip}{3pt}
\setlength{\abovedisplayshortskip}{1pt}
\setlength{\belowdisplayskip}{3pt}
\setlength{\belowdisplayshortskip}{1pt}
\setlength{\jot}{3pt}
\setlength{\textfloatsep}{3pt}

\section{Introduction}
Modern machine learning models can be difficult to probe and understand after they have been trained.  This is a major problem for the field, with consequences for trustworthiness, diagnostics, debugging, robustness and a range of other engineering and human interaction issues surrounding the deployment of a model.

There have been several lines of attack on this problem.  One involves changing the model design or training process so as to enhance interpretability. This can involve retreating to simpler models and/or incorporating strong regularizers that effectively simplify a complex model.  In both cases, however, there is a possible loss of prediction accuracy.  Models can also be changed in more sophisticated ways to enhance interpretability;
for example, attention-based methods have yielded deep models for vision and language tasks that improve interpretability at no loss to prediction accuracy~\cite{ba2014multiple,xu2015show,gregor2015draw,chen2015abc, yang2016stacked,xu2016ask,vaswani2017attention}. 

Another approach treats interpretability as a separate problem from prediction. Given a predictive model, an interpretation method yields, for each instance to which the model is applied, a vector of importance scores associated with the underlying features.  Within this general framework, methods can be classified as being model-agnostic or model-aware. Model-aware methods require additional assumptions, or are specific to a certain class of models \cite{simonyan2013deep,bach2015pixel, shrikumar2016not,karpathy2016visualizing,sundararajan2017axiomatic, godin2018explaining}. Model-agnostic methods can be applied in a black-box manner to arbitrary models \cite{ribeiro2016should, baehrens2010explain,lundberg2017unified,vstrumbelj2010efficient,datta2016algorithmic, li2016understanding}. 

While the generality of the stand-alone approach to interpretation is appealing, current methods provide little opportunity to leverage prior knowledge about what constitutes a satisfying interpretation in a given domain. Such methods have been studied most notably in the setting of natural-language processing (NLP), where there is an ongoing effort to incorporate linguistic structure (syntactic, semantic and pragmatic) in machine learning models. Such structure can be brought to bear in the model, the interpretation of a model, or both.  For example, \citet{socher2013recursive} introduced a recursive deep model to understand and leverage compositionality in tasks such as sentiment detection. \citet{lei2016rationalizing} proposed to use a combination of two modular components, generator and encoder, to explicitly generate rationales and make prediction for NLP tasks. 

Compositionality, or the rules used to construct a sentence from its constituent expressions, is an important property of natural language. While current interpretation methods fall short of quantifying compositionality directly, there has been a growing interest in investigating the manner in which existing deep models capture the interactions between constituent expressions that are critical for successful prediction~\cite{li2015visualizing, lei2016rationalizing, li2016understanding,godin2018explaining}. However, existing approaches often lack a systematic treatment in quantifying interactions, and the generality to be applied to arbitrary models.  

In the current paper, we focus on the model-agnostic interpretation of NLP models. Our approach quantifies the importance of words by leveraging the syntactic structure of linguistic data, as represented by constituency-based parse trees. In particular, we develop the \textit{LS-Tree value}, a procedure that provides instance-wise importance scores for a model by minimizing the sum-of-squared residuals at every node of a parse tree for the sentence in consideration. We provide theoretical support for this  by relating it to the Banzhaf value in coalitional game theory \cite{banzhaf1964weighted}. 

Our framework also provides a seedbed for studying compositionality in natural language. Based on the LS-Tree value, we develop a novel method for quantifying interactions between sibling nodes on a parse tree captured by the target model, by exploiting Cook's distance in linear regression \cite{cook1977detection}. We show that the proposed algorithm can be used to analyze several aspects of widely-used NLP models, including nonlinearity, the ability to capture adversative relations, and overfitting.  In particular, we carry out a series of experiments studying four models---a linear model with Bag-Of-Word features, a convolutional neural network \cite{kim2014convolutional}, an LSTM \cite{hochreiter1997long}, and the recently proposed BERT model \cite{devlin2018bert}.

\section{Least squares on parse trees}
For simplicity, we restrict ourselves to classification. Assume a model maps a sentence to a vector of class probabilities. We use $f$ to denote the function that maps an input sentence $x=(x_1,\dots,x_d)$ to the log probability score of a selected class. Let $2^{[d]}$ denote the powerset of $[d]:=\{1,2,\dots,d\}$.  The parse tree maps the sentence to a collection of subsets, denoted as $\powerset\subset 2^{[d]}$, where each subset $S\in\powerset$ contains the indices of words corresponding to one node in the parse tree. See Figure~\ref{fig:example} for an example. By abuse of notation, we use $f(S)$ to denote the output of the model evaluated on the words with indices $S$, with the rest of the words replaced by zero paddings or some reference placeholder. We call $v:\powerset \to \mathbb R$ defined by $v(S) := f(S) - f(\emptyset)$ a \textit{characteristic function}, which captures the importance of each word subset to the prediction.
\begin{figure}[bt!]
\centering
\includegraphics[width=0.99\linewidth]{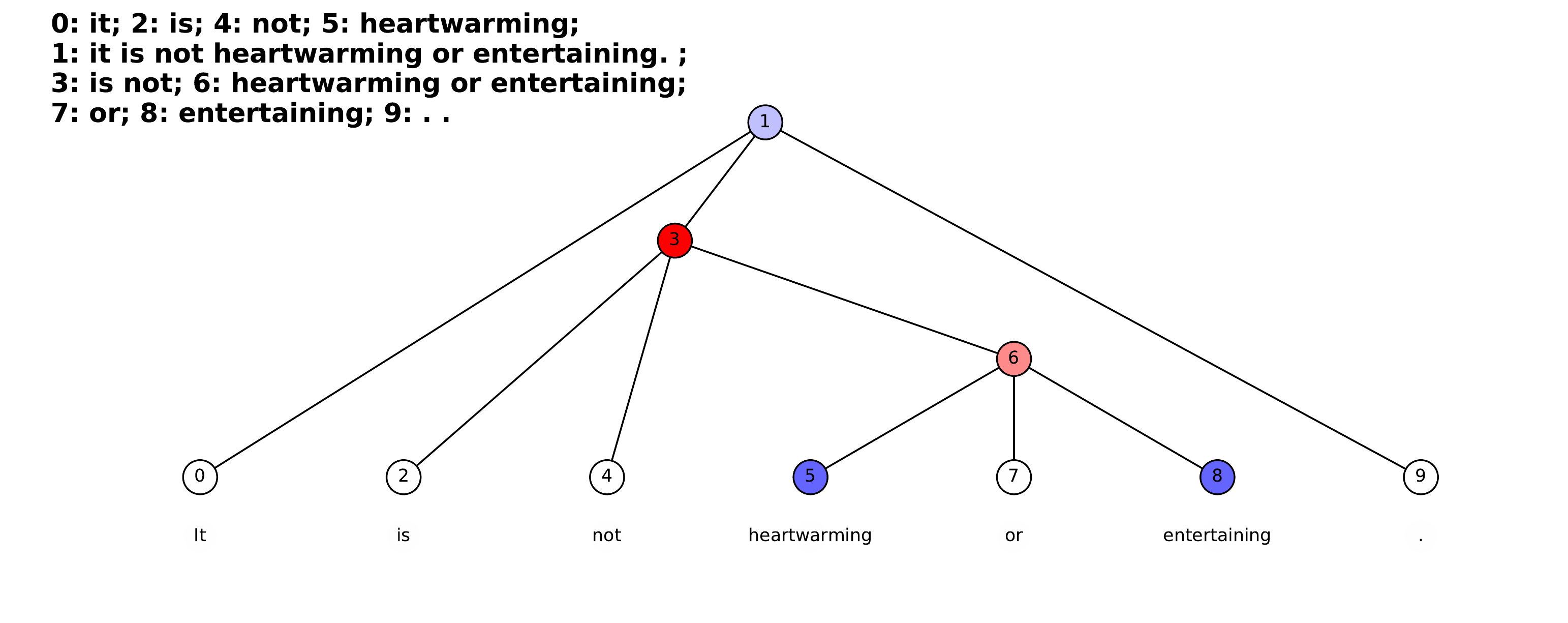} 
\caption{An example parse tree. Top left shows how each node corresponds to a word subset. Color indicates the direction and strength of interaction as assigned by Algorithm~\ref{alg:phrase}. Red is used for the direction of positive class, and blue otherwise.}
\label{fig:example} 
\end{figure} 
We seek the optimal linear function on the Boolean hypercube to approximate the characteristic function on $\powerset$, and use the coefficients as importance scores assigned to each word. Concretely, we solve the following least squares problem:
\begin{align}
  \underset{\psi\in \mathbb R^d}{\min} 
  \sum_{S\in\powerset}[v(S) - \sum_{i\in S} \psi_i]^2,\label{eq:ls}
\end{align}
where component $\psi_i$ of the optimal $\psi$ is the importance score of word with index $i$. We name the map from $(\powerset, v)$ to the solution to Equation~\eqref{eq:ls} the \textit{LS-Tree value}, because it results from least squares (LS) on parse trees, and can be considered as a \textit{value} in coalitional game theory.
\section{Connection to coalitional game theory} 
In this section, we give an interpretation of the LS-Tree value from the perspective of coalitional game theory. 

Model interpretation has been studied using tools from coalitional game theory \cite{vstrumbelj2010efficient,datta2016algorithmic,lundberg2017unified,chen2018lshapley}. We build on this line of research by considering a restriction on coalitions induced by the syntactic structure of the input.

Let $\powerset\subset 2^{[d]}$ be the collection of word subsets constructed from the parse tree. Taking each word as a player, we can define a coalitional game between $d$ words in a sentence as a pair $(\powerset, v)$, where $\powerset\subset 2^{[d]}$ enforces restrictions on coalition among players and $v: \powerset\to \mathbb R$ with $v(\emptyset) = 0$ is the characteristic function defined by the model evaluated on each coalition. A \textit{value} is a mapping that associates a $d$-dimensional payoff vector $\psi(\powerset, v)$ to each game $(\powerset, v)$, each entry corresponding to a word. The value provides rules which give allocations to each player for any game. 

The problem of defining a fair value in the setting of full coalition (when $\powerset = 2^{[d]}$) has been studied extensively in coalitional game theory \cite{shapley1953value,banzhaf1964weighted}. One popular value is the Banzhaf value introduced by \citet{banzhaf1964weighted}. For each $i\in [d]$ it defines the value:
\begin{equation*}
\phi_i(2^{[d]}, v) = \frac{1}{2^{d-1}}\sum_{S\subset N\setminus i}[v(S\cup i) - v(S)].
\end{equation*}
The Banzhaf value can be characterized as the unique value that satisfies the following four properties \cite{nowak1997axiomatization}: 

i) Symmetry: If $v(S\cup i) = v(S\cup j)$ for all $S\subset [d]\setminus \{i, j\}$, we have $\phi_i(2^{[d]}, v) = \phi_j(2^{[d]}, v)$.

ii) Dummy player property: If $v(S\cup i) = v(S) + v(i)$ for all $S\subset [d]\setminus i$, we have $\phi_i(2^{[d]}, v) = v(i)$. 

iii) Marginal contributions: For any two characteristic functions $v, w$ such that $v(S\cup i) - v(S) = w(S\cup i) - w(S)$ for any $S\subset [d]$, we have $\phi_i(2^{[d]}, v) = \phi_i(2^{[d]}, w)$.

iv) 2-Efficiency: If $i,j\in[d]$ merges into a new player $p$, then $\phi_p(2^{[d]\setminus \{i,j\}\cup p}, v^{ij}) = \phi_i(2^{[d]}, v) + \phi_j(2^{[d]}, v)$, where $v^{ij}(S) := v(S)$ if $p\notin S$ and $v^{ij}(S) := v(S\setminus p \cup i \cup j)$ otherwise, for any $S\subset [d]\setminus \{i,j\}\cup p$. 

These properties are natural for allocation of importance to prediction in model interpretation. Symmetry states that two features have the same allocation if their marginal contributions to feature subsets are the same. The dummy property states that a feature is allocated the same amount as the contribution of itself alone if its marginal contribution always equals the model evaluation on its own. The linear model yields such an example. Marginal contributions states that a feature which has the same marginal contribution between two models for any word subset has the same amount of allocation. 2-Efficiency states that allocation of importance is immune to artificial merging of two features.

To employ game-theoretic concepts such as the Banzhaf value in the interpretation of NLP models, we need to recognize that arbitrary combinations of words are not likely to be accepted as valid interpretations by humans. We might wish to start with a set of combinations that are likely to be interpretable by humans, and can be obtained via human-interpretable data, and then define the worth of other combinations of words via extrapolation. It turns out that the LS-Tree value as defined in the previous section can be interpreted as exactly such an extrapolation, where each node of the parse tree represents an interpretable word combination:
\begin{theorem}
Suppose a value $\psi$ coincides with the Banzhaf value $\phi$ for any game of full coalition, and for every game $(\powerset, v)$ with restricted coalition, it is consistent under the addition of an arbitrary subset $S\notin \powerset$:
\begin{equation}
\psi(\powerset, v) = \psi(\powerset \cup \{S\}, v'),
\end{equation}
where $v'$ is defined as $v'(T) = v(T)$ for $T\neq S$ and $v'(S) = \sum_{i\in S}\psi_i(\powerset, v)$. Then $\psi$ coincides with the LS-Tree value. 
\end{theorem}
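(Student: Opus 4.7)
The plan is to prove the theorem in two stages: first, verify that the LS-Tree value itself satisfies both hypotheses --- Banzhaf coincidence on full coalitions and consistency under extension; second, use these two hypotheses together to pin down any such $\psi$ uniquely.

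Consistency under extension is the cleaner piece. Let $\psi^\star$ denote the LS-Tree solution of \eqref{eq:ls} on $(\powerset, v)$. When $\powerset$ is enlarged to $\powerset\cup\{S\}$ with $v'(S)=\sum_{i\in S}\psi^\star_i$, the extended sum of squares differs from the original only by the nonnegative term $(v'(S)-\sum_{i\in S}\psi_i)^2$, which vanishes at $\psi=\psi^\star$; hence $\psi^\star$ still minimizes the extended problem. The Banzhaf-coincidence property, by contrast, requires showing that when $\powerset=2^{[d]}$ the normal equations of \eqref{eq:ls} reduce to the Banzhaf averaging formula --- essentially the Hammer--Holzman representation of the Banzhaf value as the linear part of the best pseudo-Boolean least-squares approximation of $v$.

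For uniqueness, given any $\psi$ satisfying (a) and (b), I would repeatedly apply (b) to adjoin one missing subset $S'$ at a time, each time setting its characteristic value to $\sum_{i\in S'}\psi_i(\powerset,v)$, until $\powerset$ is grown to $2^{[d]}$. Because $\psi$ is invariant under each step, we obtain $\psi(\powerset,v)=\psi(2^{[d]},\tilde v_\psi)$ for the resulting game $\tilde v_\psi$ defined by $\tilde v_\psi(T)=v(T)$ for $T\in\powerset$ and $\tilde v_\psi(T)=\sum_{i\in T}\psi_i(\powerset,v)$ otherwise; applying (a) yields $\psi(\powerset,v)=\phi(2^{[d]},\tilde v_\psi)$. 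Running the same procedure for the LS-Tree value gives the analogous identity, and since $y\mapsto\phi(2^{[d]},\tilde v_y)$ is affine in $y$, the self-consistency equation $y=\phi(2^{[d]},\tilde v_y)$ is a linear system in $y$; both $\psi(\powerset,v)$ and the LS-Tree value solve it, so showing this system has a unique solution closes the argument.

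I expect the main obstacle to be step (i), specifically proving that the intercept-free problem \eqref{eq:ls} on the full Boolean cube actually returns the Banzhaf coefficients. The classical Hammer--Holzman statement is usually phrased for the best \emph{affine} approximation, so one has to exploit $v(\emptyset)=0$ together with the symmetry of the normal equations to conclude that the intercept-free problem yields the same slopes. A secondary technical point is invertibility of the linear part of $y\mapsto\phi(2^{[d]},\tilde v_y)$, needed to guarantee uniqueness of the fixed point; this should reduce to a diagonal-dominance estimate on the matrix whose entries count pairs of indices lying simultaneously in subsets missing from $\powerset$.
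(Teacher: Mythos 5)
Your overall architecture is the same as the paper's: the paper likewise rests on (i) the Hammer--Holzman identification of the Banzhaf value with the coefficients of the best linear least-squares approximation on the full cube, and (ii) a Katsev-style argument that consistency under adjoining subsets propagates the full-coalition characterization down to restricted collections. Your verification that the LS-Tree value is consistent under extension (the added residual is nonnegative and vanishes at the optimum) is correct, and your worry about invertibility in the fixed-point step can be dispatched more cheaply than by diagonal dominance: if $y=\phi(2^{[d]},\tilde v_y)$ and $\phi$ is the full-cube least-squares value, then $y$ minimizes $h(\psi)+g_y(\psi)$ where $h$ is the restricted objective and $g_y(\psi)=\sum_{T\notin\powerset}\bigl[\sum_{i\in T}(y_i-\psi_i)\bigr]^2\ge 0$ with $g_y(y)=0$; hence $\nabla g_y(y)=0$ and $\nabla h(y)=0$, and since $\powerset$ contains every singleton the design matrix has full column rank, so $y$ is the unique minimizer of $h$, i.e.\ the LS-Tree value.

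The genuine gap is exactly the step you flag as the main obstacle, and the fix you propose does not work: with $v(\emptyset)=0$, the \emph{intercept-free} least-squares problem on the full cube does not return the Banzhaf coefficients. Take $d=2$ with $v(\emptyset)=0$, $v(\{1\})=1$, $v(\{2\})=0$, $v(\{1,2\})=0$. The Banzhaf value of player $1$ is $\tfrac12\bigl[(v(\{1\})-v(\emptyset))+(v(\{1,2\})-v(\{2\}))\bigr]=\tfrac12$, whereas minimizing $(1-\psi_1)^2+\psi_2^2+(\psi_1+\psi_2)^2$ gives $\psi_1=\tfrac23$. Hammer--Holzman recovers the Banzhaf slopes only when a free intercept is included, and in this example the optimal intercept is $\tfrac14\neq 0$ even though $v(\emptyset)=0$, so ``exploiting $v(\emptyset)=0$ and the symmetry of the normal equations'' cannot close the gap. (Since for $d=2$ the parse-tree collection is all of $2^{[2]}$, this example even pits hypothesis (a) directly against the conclusion.) To be fair, this defect is inherited from the source: the paper's proof asserts the intercept-free identity when citing Hammer--Holzman, and Equation~\eqref{eq:ls} has no intercept. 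Repairing the argument requires either adding an intercept to the least-squares problems throughout, or replacing ``Banzhaf value'' in hypothesis (a) by the intercept-free full-cube least-squares value; your remaining steps would then go through essentially unchanged.
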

\begin{proof}
It was shown in \citet{hammer1992approximations} that the Banzhaf value assigns to each player $i$ the corresponding coefficient in the best linear approximation of $v$. That is, 
\begin{align*}
  \phi(2^{[d]}, v) = \underset{\psi\in \mathbb R^d}{\arg\min} 
  \sum_{S\subset [d]}[v(S) - \sum_{i\in S} \psi_i]^2.
\end{align*} 
Following the proof of Theorem 3.3 in \citet{katsev2011least}\footnote{The original theorem is established for the solution to Problem~\eqref{eq:wls} with the efficiency constraint that $\sum_{i\in[d]}x_i = v([d])$. But the same proof follows for the unconstrained version.}, it follows directly that $\psi^*$, as is defined by Equation~\eqref{eq:wls}, is the unique value that coincides with $v\to\psi^*(2^{[d]}, v)$ with full coalition and is consistent under the addition of an arbitrary subset:
\begin{align}
  \psi^*(\powerset, v) = \underset{\psi\in \mathbb R^d}{\arg\min} 
  \sum_{S\in\powerset}w_{S}[v(S) - \sum_{i\in S} \psi_i]^2.\label{eq:wls} 
\end{align} 
Taking $w_S\equiv 1$, the theorem is established.
\end{proof}
\section{Detecting interactions}
We aim to detect and quantify interactions between words in a sentence that have been captured by the target model. While there are exponentially many possible interactions between arbitrary words, we restrict ourselves to the ones permitted by the structure of language. Concretely, we focus on interactions between siblings, or nodes with a common parent, in the parse tree. As an example, node $3$ in Figure~\ref{fig:example} represents interaction between ``is,'' ``not'' and ``heartwarming or entertaining.''

We define interaction as \textit{deviation of composition from linearity} in a given sentence. As a result, all non-leaf nodes in the tree are expected to admit zero interaction for a linear model. 
The above definition suggests that interaction can be quantified by studying how the inclusion of a common parent representing the interaction affects the coefficients of the linear approximation of the model.

Cook's distance is a classic metric in linear regression that captures the influence of a data point \cite{cook1977detection}. It is defined as a constant multiple of the squared distance between coefficients after a data point is moved, where the distance metric is defined by the data matrix $X\in\mathbb R^{n\times d}$:
\begin{equation*}
D_i = \text{Const.}\cdot (\hat\beta_{(i)} - \hat\beta)^TX^TX(\hat\beta_{(i)} - \hat\beta),
\end{equation*}
where $\hat \beta_{(i)}$ and $\hat \beta$ are the least squares estimate with the $i$th data point deleted and the original least squares estimate respectively. A larger Cook's distance indicates a larger influence of the corresponding data point. 

In our setting, the data matrix $X$ is a Boolean matrix where each row corresponds to a node in the tree, and an entry is one if and only if the word of the corresponding index lies in the subtree of the node. To capture the interaction of a non-leaf node $i$ (corresponding to some $S\in\powerset$), it does not suffice to only delete the corresponding row, because all of its ancestor nodes contain the segment represented by the node as well. To deal with this issue, we compute the distance between the least squares estimate with the rows corresponding to the node and all of its ancestors deleted, and the least squares estimate with only the rows corresponding to the ancestors deleted: 
\begin{equation}
D_i = d(\hat\beta_{(\geq i)}, \hat\beta_{(> i)})
\end{equation}
where $\hat\beta_{(\geq i)}, \hat\beta_{(> i)}$ denote the estimates with all its ancestors, including and excluding node $i$, deleted\footnote{Only entries corresponding to words within node $i$ will differ between $\hat\beta_{(\geq i)}$ and $\hat\beta_{(> i)}$, but we retain the remaining entries for notational simplicity.}. Cook's distance $d(a,b) = a^TX^TXb$ no longer has its statistical meaning here, as the normality assumption of the linear model no longer holds. A natural choice is the Euclidean distance $d_1(a,b):= \sqrt{a^Tb}$, which was also introduced by \citet{cook1977detection}. One drawback of the Euclidean distance is that it is unable to capture the direction of interaction. When this is an issue, we may use a signed distance: $d_2(a, b) := \sum_i (b_i - a_i)$, which sums up the influence of introducing the extra row on every coefficient of the linear model. We call the score defined by $d_1$ and $d_2$ absolute and signed \textit{LS-Tree interaction scores} respectively, as they are constructed from the LS-Tree value.

We propose an iterative algorithm to efficiently compute the interaction of each node on a tree with $n:=|\powerset|$ nodes. As a first step, $n$ model evaluations are performed, one evaluation for each node. 
For a node $i$, we denote as $\text{Ch}(i)$ the set of its children, $X_{(\geq i)}$ and $X_{(> i)}$ the data matrices excluding the ancestors of $i$, further excluding and including $i$ itself respectively, and $x_j^T$ the row corresponding to node $j$. The interaction score of each $j\in\text{Ch}(i)$ is a function of $\hat\beta_{(> j)} - \hat\beta_{(\geq j)}$. Denote $A_j = X_{(\geq j)}^TX_{(\geq j)}$. For each non-leaf node $j$, $A_j$ is of full rank and thus invertible. We show how $A_j^{-1}$ and $\hat\beta_{\geq j}$ can be computed from $A_i^{-1}$ and $\hat\beta_{\geq i}$. In fact, with an application of the Sherman-Morrison formula \cite{sherman1950adjustment}, we have 
\begin{align}
\hat \beta_{(>j)} &= (X_{(\geq j)}^TX_{(\geq j)} + x_j^Tx_j)^{-1}(X_{(\geq j)}^TY_{(\geq j)} + x_j^TY_j )\nonumber\\
&= (I - \frac{(X_{(\geq j)}^TX_{(\geq j)})^{-1}x_jx_j^T}{1 + x_j^T(X_{(\geq j)}^TX_{(\geq j)})^{-1}x_j})\hat\beta_{(\geq j)} \nonumber \\
& ~~~~~~~~~~~~~~+\frac{(X_{(\geq j)}^TX_{(\geq j)})^{-1}x_jY_j}{1 + x_j^T(X_{(\geq j)}^TX_{(\geq j)})^{-1}x_j}\nonumber\\
& = (I - \frac{A_j^{-1}x_jx_j^T}{1 + x_j^TA_j^{-1}x_j})\hat\beta_{(\geq j)} + \frac{A_j^{-1}x_jY_j}{1 + x_j^TA_j^{-1}x_j}.
\label{eq:pre-rearrange}
\end{align}
Rearranging the terms in Equation~\eqref{eq:pre-rearrange}, we have
\begin{align}
\hat \beta_{(\geq j)} &= \hat \beta_{(> j)} - A_j^{-1}x_j[Y_j - x_j^T\hat\beta_{(>j)}]. \label{eq:iterative2}
\end{align}
With another application of the Sherman-Morrison formula, we have 
 \begin{align}
A_j^{-1} &= (X_{(\geq i)}^TX_{(\geq i)} - x_jx_j^T)^{-1}\nonumber\\
&= A_i^{-1} + \frac{A_i^{-1}x_jx_j^TA_i^{-1}}{1 - x_j^T A_i^{-1}x_j}.
\label{eq:iterative}
\end{align}
For leaf nodes, the entry of $\hat\beta_{(\geq j)}$ corresponding to $j$ is set to zero, with the remaining entries equal to those of $\hat\beta_{(> j)}$. This is a result of the minimal Euclidean norm solution of Problem~\ref{eq:ls}, obtained from the pseudoinverse of $A_j$. Consequently, the (signed) interaction score of a leaf equals the model evaluation on the leaf alone.

We summarize the derivation in Algorithm~\ref{alg:phrase}, which traverses on the parse tree from root to leaves in a top-down fashion to compute the interaction scores of each node. As the number of nodes in a parse tree is linear in the number of words, Algorithm~\ref{alg:phrase} is of complexity $\mathcal O(d^3)$, plus the complexity of parsing the sentence, which is $\mathcal O(d)$ in our experiments, and $\mathcal O(d)$ model evaluations. Figure~\ref{fig:example} shows how Algorithm~\ref{alg:phrase} assigns signed interaction scores to a given example.
\begin{algorithm}[H]
       \caption{LS-Tree Interaction Detection}
       \label{alg:phrase}
       \begin{algorithmic}
       \Require Model $f$.
       \Require Sentence $x$.
       \Ensure LS-Tree value; interaction score.
          \Function {Detect-Interaction}{$f, x$}
          \State Find the parse tree $\mathcal T$ of $x$.
          \State Find the collection of subsets $\powerset$ corresponding to the parse tree.
           \For{each node $i$ in $\mathcal T$}
           \State Compute the model evaluation $v(S)$ for the corresponding subset $S$.
           \EndFor
           \State Compute LS-Tree value $\hat \beta$ for words via least squares.
           \State Find the root $r$ of $\mathcal T$.
           \State \textsc{Recursion}($v$, $\powerset$, $r$, $(X^TX)^{-1}$, $\hat \beta$)
           \EndFunction
       \end{algorithmic}
 \end{algorithm}
\vspace{-0.5cm}
\begin{algorithm}[H]
       \caption{Recursion}
       \label{alg:recursion}
       \begin{algorithmic}
       \Require $v, \powerset$, node $j$, $A^{-1}_{i}$, $\hat\beta_{(\geq i)}$

          \Function{Recursion}{$v, \powerset$, node $j$, $A^{-1}_{i}$, $\hat\beta_{(\geq i)}$}
          \If{$j$ is not a leaf}
            \State Compute $A_j^{-1}, \hat\beta_{(\geq j)}, D_j$ via Equation~\eqref{eq:iterative} and Equation~\eqref{eq:iterative2}.
            \For{each child $c$ in of $j$}
              \State Recursion($v$, $\powerset$, $c$, $A_j^{-1}$, $\hat\beta_{(\geq j)}$) 
            \EndFor 
           \Else
            \State Assign $D_j$ with $v(j)$ or $|v(j)|$. 
          \EndIf
          \EndFunction
       \end{algorithmic}
 \end{algorithm}
\section{Experiments}
We carry out experiments to analyze the performance of four different models: Bag of Words (BoW), Word-based Convolutional Neural Network (CNN) \cite{kim2014convolutional}, bidirectional Long Short-Term Memory network (LSTM) \cite{hochreiter1997long}, and Bidirectional Encoder Representations from Transformers (BERT) \cite{devlin2018bert}, across three sentiment data sets of different sizes: Stanford Sentiment Treebank (SST) \cite{socher2013recursive}, IMDB Movie reviews \cite{maas2011learning} and Yelp reviews Polarity \cite{zhang2015character}. For an instance with multiple sentences, we parse each sentence separately, and introduce an extra node as the common parent of all roots. Interactions between sentences are not considered in our experiments.

BoW fits a linear model on the Bag-of-Words features. Both CNN and LSTM use a 300-dimensional GloVe word embedding \cite{pennington2014glove}. The CNN is composed of three 100-dimensional convolutional 1D layers with 3, 4 and 5 kernels respectively, concatenated and fed into a max-pooling layer followed by a hidden dense layer. The LSTM uses a bidirectional LSTM layer with 128 units for each direction. BERT pre-trains a deep bidirectional Transformer \cite{vaswani2017attention} on a large corpus of text by jointly conditioning on both left and right context in all layers. It has achieved state-of-the-art performance on a large suite of sentence-level and token-level tasks. See Table~\ref{tab:data} for a summary of data sets and the accuracies of the four models.

We use the Stanford constituency parser \cite{goldberg2012dynamic,sagae2005classifier,zhang2009transition,zhu2013fast} for all the experiments. It is a transition-based parser that is faster than chart-based parsers yet achieves comparable accuracy, by employing a set of shift-reduce operations and making use of non-local features.

\subsection{Deviation from linearity}

We quantify the deviation of three nonlinear models from a linear model via the proposed LS-Tree value and interaction scores, both for specific instances and on a data set. 

The LS-Tree value can be interpreted as supplying the coefficients of the best linear model used to approximate the target model locally for each instance. The correlation between the LS-Tree value and the global linear model with Bag of Words (BoW) features can be used as a measure of nonlinearity of the target model at the instance. Table~\ref{tab:examples} shows three examples in SST, correctly classified by both BERT and BoW. BERT has low and high correlations with linear models at the first and second examples in Table~\ref{tab:examples} respectively. In particular, the top keywords, as ranked by the LS-Tree value, are different between two models. 

\begin{table}[bt!]
\centering
\resizebox{0.90\linewidth}{!}{
\begin{tabular}{|c|c|c|c|c|c|c|c|c|}
\hline\hline
Data Set & Classes & Train Samples & Test Samples & Avg. Length & BoW & CNN & LSTM & BERT \\
\hline\hline
SST \cite{socher2013recursive} & 2 & 6,920 & 872 & 19.7& 0.82\% & 0.85\% & 0.85\% & 0.93\%\\
\hline
IMDB \cite{maas2011learning} & 2 & 25,000 & 25,000 & 325.6& 0.94\% & 0.90\% & 0.88\% & 0.93\%\\ 
\hline
Yelp \cite{zhang2015character} & 2 & 560,000 & 38,000 & 136.2& 0.94\% & 0.95\% & 0.96\% & 0.97\% \\
\hline\hline
\end{tabular}
} 
\caption{Statistics of the three data sets, together with the test accuracy of the four models.}
\label{tab:data}
\end{table}

\begin{table}[t]
\centering
\resizebox{0.35\linewidth}{!}{
\begin{tabular}{|c|c|c|c|c|}
\hline
 & BoW & CNN & LSTM & BERT \\ 
\hline\hline 
SST &1.000 & 0.591& 0.580 & \textbf{0.465} \\
\hline
IMDB & 1.000 & 0.442 & 0.552 & \textbf{0.321} \\
\hline
Yelp &1.000 & 0.683 & 0.684& \textbf{0.476} \\
\hline
\end{tabular}
} 
\caption{Average correlation of LS-Tree values with the linear coefficients, comparable across different models on the same dataset.}
\label{tab:cor}
\end{table}

\begin{figure}[bt!]
\centering
\includegraphics[width=0.7\linewidth]{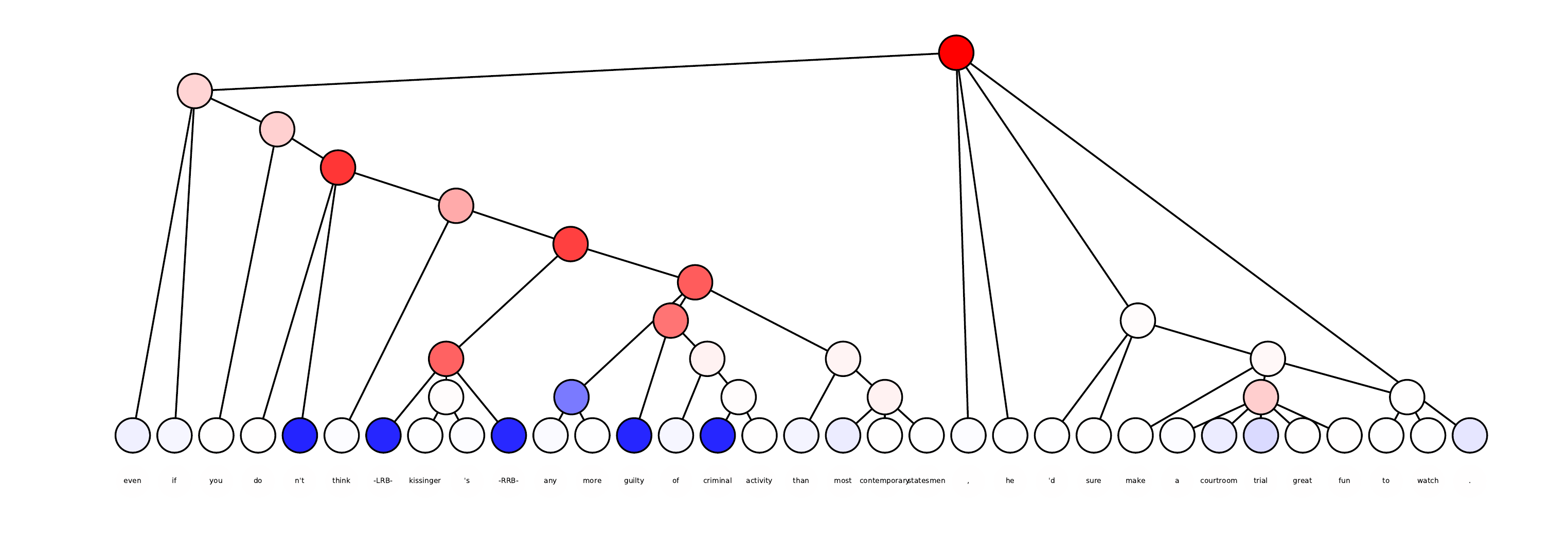} 
\includegraphics[width=0.7\linewidth]{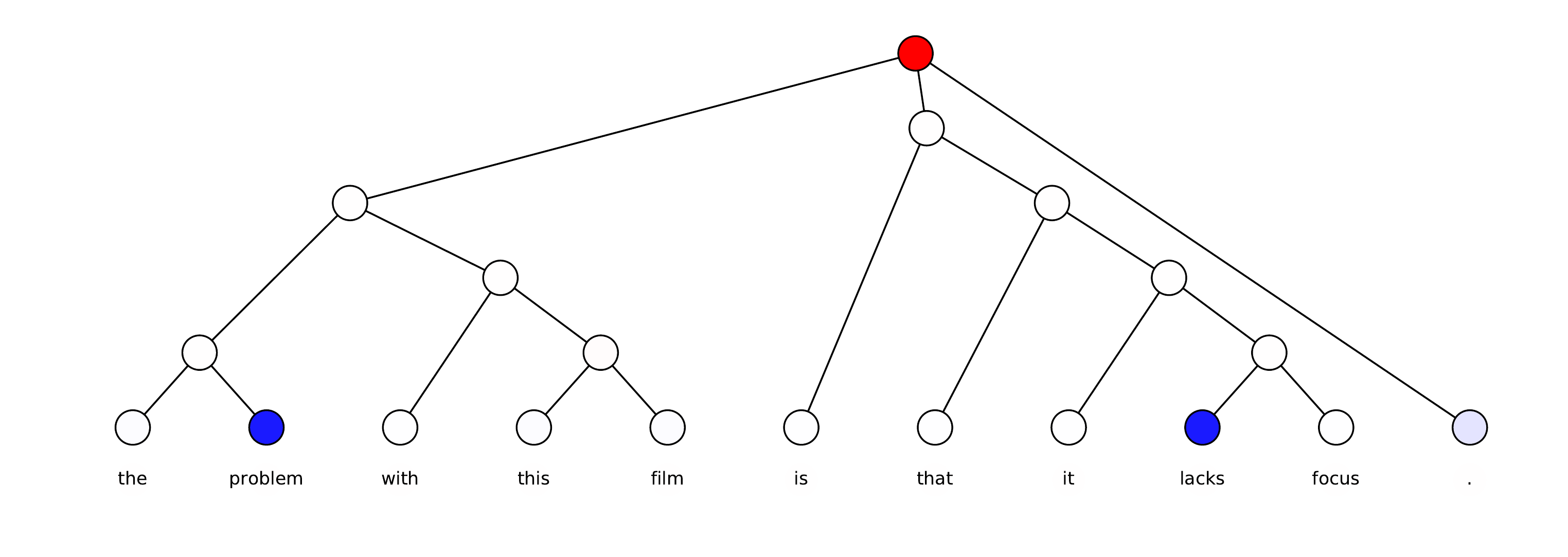}
\includegraphics[width=0.7\linewidth]{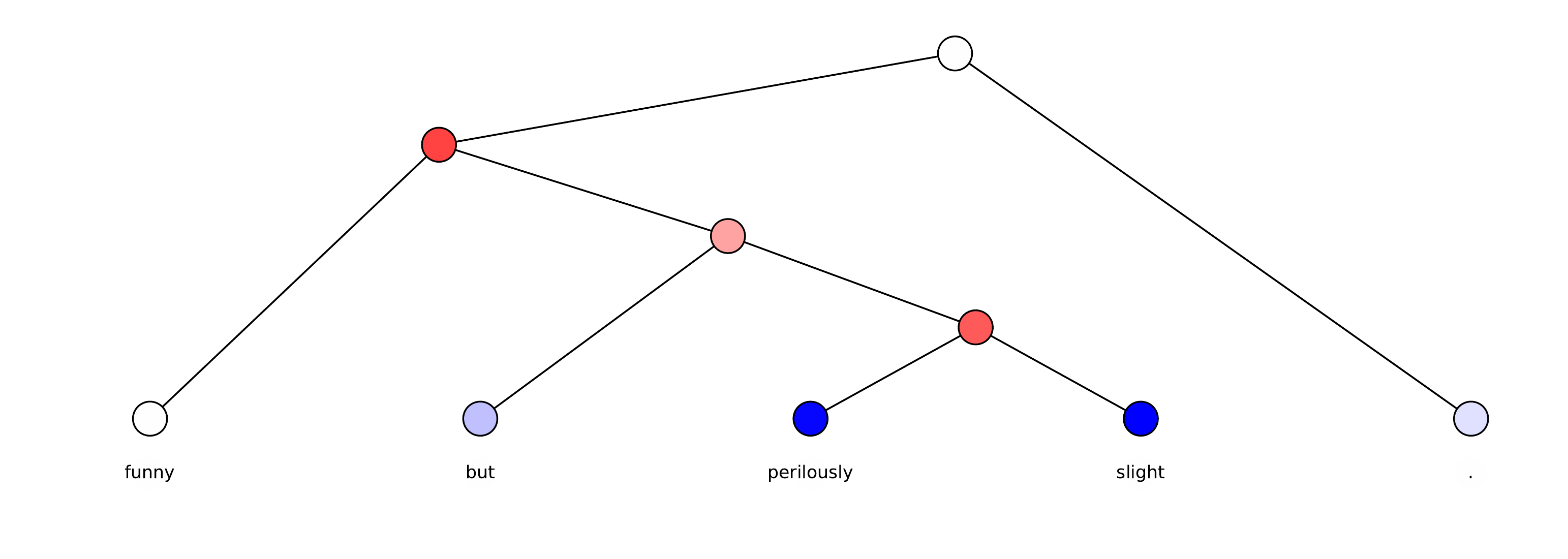} 
\caption{Visualization of parse trees of examples in Table~\ref{tab:examples}. Nodes are colorized based on the signed interaction scores, red for the direction of positive class, and blue otherwise.}
\label{fig:trees} 
\end{figure} 

The average of correlation with BoW across instances can be used as a measure of nonlinearity on a certain data set. The average correlation of BoW, CNN, LSTM and BERT with a linear model is shown in Table~\ref{tab:cor}, which indicates that BERT is the most nonlinear model among the four. CNN is more nonlinear than LSTM on IMDB but comparably nonlinear on SST and Yelp.

\begin{table}[t]
\centering
\resizebox{0.99\linewidth}{!}{
\begin{tabular}{|p{9cm}|p{9cm}|c|c|c|}
\hline\hline
BERT & BoW & Category & Correlation & Depth \\
\hline\hline
Even if you \colorbox[rgb]{1,0.0,0.0}{do} n't \colorbox[rgb]{1,0.5,0.5}{think} kissinger's any more guilty of criminal activity than most contemporary statesmen, he'd sure make a courtroom trial great fun to watch. &
Even if you don't think kissinger's any more guilty of criminal activity than most \colorbox[rgb]{1,0.5,0.5}{contemporary} statesmen, he'd sure make a courtroom trial great \colorbox[rgb]{1,0.0,0.0}{fun} to watch. & Positive & 0.173 & 11 \\

\hline
The \colorbox[rgb]{1,0.5,0.5}{problem} with this film is that it \colorbox[rgb]{1,0.0,0.0}{lacks} focus.&
The \colorbox[rgb]{1,0.5,0.5}{problem} with this film is that it \colorbox[rgb]{1,0.0,0.0}{lacks} focus.
& Negative & 0.939 & 1 \\
\hline

\colorbox[rgb]{1,0.0,0.0}{Funny} \colorbox[rgb]{1,0.5,0.5}{but} perilously slight. & 
\colorbox[rgb]{1,0.0,0.0}{Funny} \colorbox[rgb]{1,0.5,0.5}{but} perilously slight. & 
Positive & 0.938 & 4 \\
\hline\hline
\end{tabular}
} 
\caption{Examples from SST correctly classified by BERT and BoW. The correlation with the linear model, and Depth of the top node are listed. Top two words ranked by the LS-Tree value, and by the linear coefficients, are colorized.}
\label{tab:examples}
\end{table}

\begin{figure}[bt!]
\centering
\includegraphics[width=0.33\linewidth]{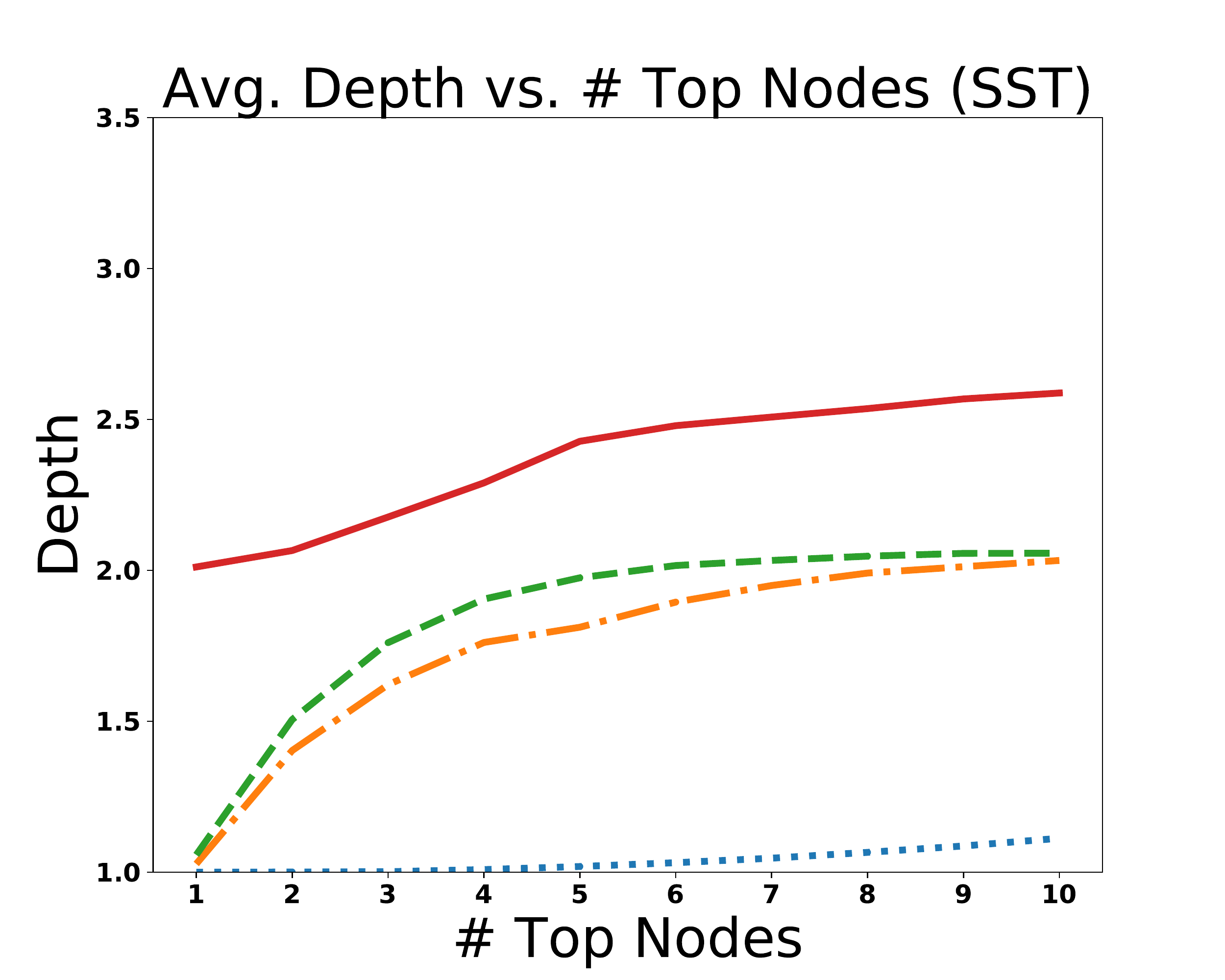} 
\includegraphics[width=0.33\linewidth]{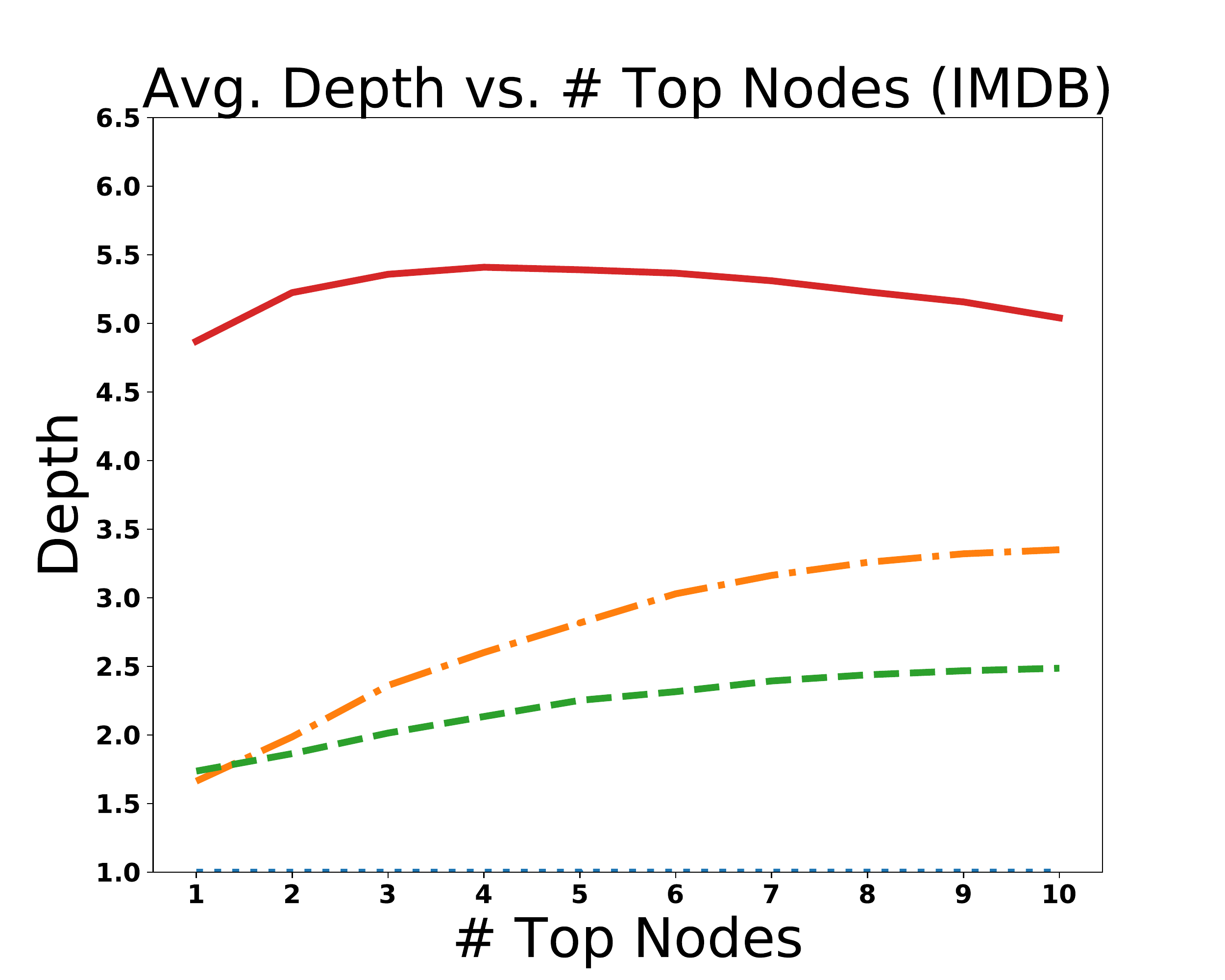}
\includegraphics[width=0.33\linewidth]{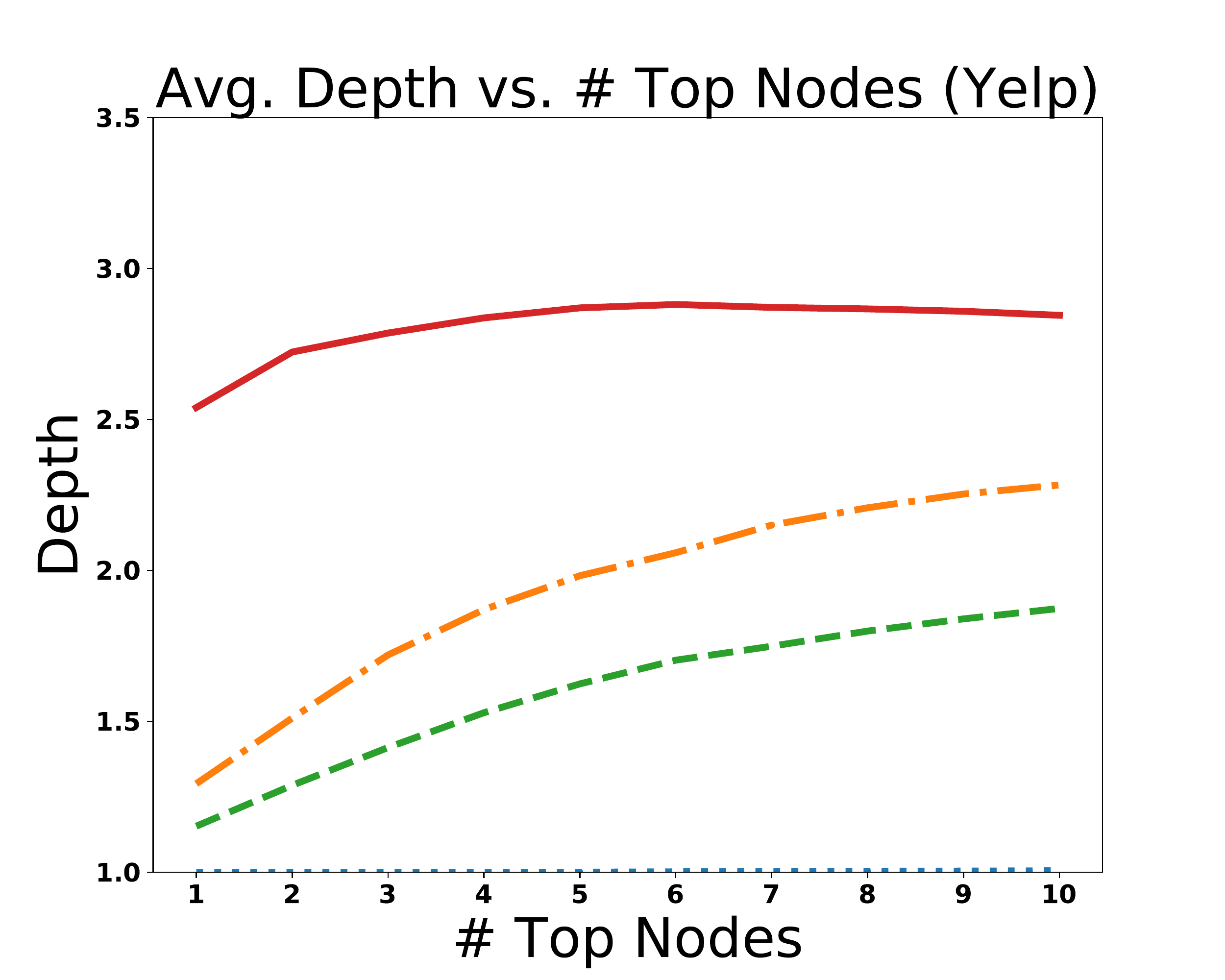}
\includegraphics[width=0.5\linewidth]{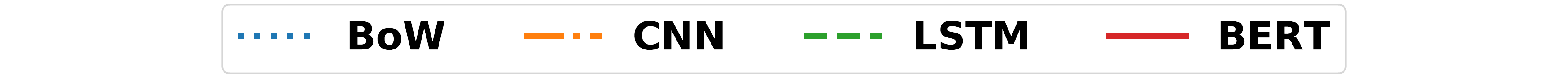}  
\caption{Average depth of top nodes as the number of the selected top nodes varies.}
\label{fig:depth} 
\end{figure}

Correlation alone may not suffice to capture the nonlinearity of a model. For example, the third sentence in Table~\ref{tab:examples} has a relatively high correlation, but the bottom left parse tree in Figure~\ref{fig:trees} indicates that the top interaction ranked by the signed interaction score is the node combining ``funny'' with ``but perilously slight''. This indicates the BERT model has captured the adversative conjunction, which BoW is not capable of. The ability to capture closer-to-the-top nodes in a parse tree is an indication of nonlinearity of the model.
To quantify this ability, we define the depth of a node in the parse tree as the maximum distance of the node from the bottom: 
\begin{align*}
\text{Depth}(i) = \begin{cases}
1 + \max_{c\in \text{Ch}(i)}\text{Depth}(c) &\text{if } \text{Ch}(i)\neq \emptyset,\\
1 &\text{otherwise.}
\end{cases}
\end{align*}
For a linear model, all non-leaf nodes have zero interaction, and thus the top ranked nodes are of depth 1, until all leaves with positive weights are enumerated. The higher the depth of top-ranked nodes, the more nonlinear a model is at a specific instance.

The average depths of top nodes ranked by interaction scores across instances can be used as a measure of the nonlinearity of the model on that data set. Figure~\ref{fig:depth} compares the average depths across BoW, CNN, LSTM and BERT on the three data sets, with top $k=1,2,\dots,10$ words selected. BoW is used as a baseline whose non-leaf nodes have zero interaction scores.  We use the absolute interaction scores here to capture all interactions, no matter in the same or opposite direction of prediction. BERT is still the most capable of capturing deeper interactions, followed by CNN and LSTM. CNN turns out to be a more nonlinear model than LSTM on Yelp, which was not captured by correlation.

\begin{table}[t]
\centering
\resizebox{1.0\linewidth}{!}{
\begin{tabular}{|c|c|c|c|c|c|c|c|c|c|c|c|c|}
\hline
\textbf{Dataset}&\textbf{Model} & \textbf{Avg. Score} & not &but &yet &though &although &even though &whereas & except &despite &in spite of \\
\hline\hline
\multirow{4}{*}{SST} & BoW & 0.153 &0.000(6.318) &0.000(0.079) &0.000(2.005) &0.000(0.865) &0.000(2.222) &0.000(0.000) &-(-) &0.000(4.280) &0.000(3.519) &0.000(0.000) \\
\cline{2-13}
& CNN & 0.634 &1.673(4.592) &1.694(1.444) &0.568(0.959) &0.213(0.735) &0.915(0.462) &0.626(0.407) &-(-) &0.948(1.175) &\textbf{1.452}(4.270) &\textbf{2.119}(1.943) \\
\cline{2-13}
& LSTM & 0.79 &\textbf{1.746}(2.580) &1.502(0.453) &1.449(2.368) &1.153(1.094) &0.338(0.197) &1.794(0.998) &-(-) &\textbf{2.353}(3.835) &1.256(1.818) &0.590(0.624) \\
\cline{2-13}
& BERT & 1.238 &1.714(4.383) &\textbf{2.148}(1.760) &\textbf{1.669}(3.120) &\textbf{1.525}(3.268) &\textbf{1.741}(3.256) &\textbf{1.885}(2.092) &-(-) &1.156(3.331) &1.160(2.998) &0.864(2.352) \\
\hline\hline
\multirow{4}{*}{IMDB} & BoW & 0.038 &0.000(2.683) &0.000(0.263) &0.000(2.210) &0.000(1.473) &0.000(1.710) &0.000(0.000) &0.000(3.604) &0.000(1.342) &0.000(0.132) &-(-) \\
\cline{2-13}
& CNN & 0.424 &1.050(0.819) &\textbf{3.442}(0.021) &\textbf{1.689}(0.295) &0.922(0.085) &1.036(0.071) &1.175(0.467) &0.469(1.064) &\textbf{1.590}(4.067) &0.363(0.434) &-(-) \\
\cline{2-13}
& LSTM & 0.126 &0.960(3.087) &2.222(0.524) &1.500(0.238) &0.611(0.087) &0.492(1.270) &0.944(0.683) &\textbf{1.222}(3.865) &1.294(4.008) &0.286(0.508) &-(-) \\
\cline{2-13}
& BERT & 1.159 &\textbf{1.616}(2.057) &3.390(1.800) &1.644(1.152) &\textbf{1.371}(2.061) &\textbf{1.735}(2.123) &\textbf{1.457}(1.557) &0.285(0.430) &1.421(2.060) &\textbf{1.518}(2.241) &-(-) \\
\hline\hline
\multirow{4}{*}{Yelp} & BoW & 0.035 &0.000(8.488) &0.000(1.015) &0.000(3.553) &0.000(1.664) &0.000(1.128) &0.000(0.000) &0.000(0.536) &0.000(0.367) &0.000(1.213) &-(-) \\
\cline{2-13}
& CNN & 0.161 &\textbf{2.287}(3.467) &\textbf{2.454}(0.932) &0.516(0.043) &0.988(0.435) &\textbf{0.889}(0.075) &0.789(0.621) &0.286(0.671) &\textbf{0.522}(2.529) &0.423(0.889) &-(-) \\
\cline{2-13}
& LSTM & 0.224 &2.173(5.950) &1.712(1.676) &\textbf{0.988}(2.065) &0.984(1.310) &0.706(1.194) &0.559(0.483) &\textbf{1.395}(1.793) &0.344(1.408) &0.514(1.153) &-(-) \\
\cline{2-13}
& BERT & 0.746 &1.384(2.106) &2.448(0.658) &0.781(0.184) &\textbf{1.336}(0.953) &0.596(0.615) &\textbf{1.019}(0.880) &0.095(0.162) &0.331(0.074) &\textbf{1.041}(0.414) &-(-) \\
\hline\hline
\end{tabular}
} 
\caption{Ratio of interaction scores of specific nodes to the average interaction score of a generic node. Ratios in parentheses are for nodes with adversative words alone.  Ratios without parentheses are for their parent nodes where the adversative relation takes place.}
\label{tab:adversative}
\end{table}

\begin{table}[t]
\centering
\resizebox{0.99\linewidth}{!}{
\begin{tabular}{|p{8cm}|c|c|c|c|c|c|}
\hline
Sentence & Meaning & BoW & CNN & LSTM & BERT \\
\hline\hline
\small{... He said he couldn't help. We had to walk \textbf{while} the snow blew in our faces. When we were almost there, we saw the shuttle pull out with the smoking shuttle driver in it, driving in the opposite direction, away from us. I can not believe how rude they were.} & during the time that & 0.000(0.338)& 0.781(0.300)& 1.761(0.839)& 0.062(0.092) \\
\hline
\small{... I ordered a cappuccino. It tasted like milk and no coffee. I was exceptionally disappointed. So \textbf{while} the place has a great reputation, even they can screw it up if they don t pay attention to detail, and at this level they should never screw it up. I had a better cup at Martys Market for crying out loud!} & whereas (indicating a contrast) & 0.000(0.338)& 1.142(0.300)& 2.155(0.839)& 2.167(0.092) \\
\hline
\small{Usually asking the server what is her favorite dish gets you a pretty good recommendation, but in this case, it was crap! The smoked brisket had that discoloration that happens to meat when it's been sitting out for a \textbf{while}. And it wasn't even tender!! Am I asking for too much?} & a period of time & 0.000(0.338)& 0.206(0.300) & 0.465(0.839)& 0.082 (0.092)\\
\hline\hline
\end{tabular}
} 
\caption{The word ``while'' in different contexts, together with the ratio of interaction scores of ``while'' nodes and their parents to the average score of a generic node. Ratios in parentheses are for nodes with ``while'' alone. Scores without parentheses are for their parents.}
\label{tab:while}
\end{table}

\subsection{Adversative relations}

Adversative words are those which express opposition or contrast. They often play an important role in determining the sentiment of an instance, by reversing the sentiment of a preceding or succeeding word, phrase or statement. We focus on four types of adversative words: negation that reverses the sentiment of a phrase or word (e.g., ``not''), adversative coordinating conjunctions that express opposition or contrast between two statements (e.g., ``but'' and ``yet''), subordinating conjunctions indicating adversative relationship (e.g., ``though,'' ``although,'' ``even though,'' and ``whereas''), prepositions that precede and govern nouns adversatively (e.g., ``except,'' ``despite'' and ``in spite of''). 

In most cases, adversative words only function if they interact with their preceding or succeeding companion. In order to verify whether models are able to capture the adversative relationship, we examine the LS-Tree interaction scores of the parent nodes of these words.

We extract all instances that contain any of the above adversative words. Then for each word in an instance, we compute the interaction score of the corresponding node with the word alone, and that of its parent node. A high interaction score on the node with the adversative word alone indicates the model inappropriately attributes to the word itself a negative or positive sentiment. A high interaction score on the parent node indicates the model captures the interaction of the adversative word with its preceding or succeeding component. To compare across different models, we further compute the average interaction score of a generic node across all instances, and report the ratio of average interaction scores of specific nodes to the average score of a generic node for respective models. 

Table~\ref{tab:adversative} reports the results on three data sets. We observe the ability of capturing adversative relation for different models varies across data sets. BERT takes the lead in capturing adversative relations on SST and IMDB, perhaps with the help of BERT's pre-training process on a large corpus, but CNN and LSTM catch up with BERT on Yelp, which has a larger amount of training data. On the other hand, all models assign a high score on nodes with adversative words alone. This perhaps results from the uneven distribution of adversative words like ``not'' among the positive and negative classes. An additional observation is that BERT has the highest score for a generic node on average across three data sets, indicating that BERT is the most sensitive to words and interactions on average.


Some words have different meanings in different contexts. It is interesting to investigate whether a model can distinguish the same word under different contexts. The word ``while'' is such an example. Table~\ref{tab:while} shows three Yelp reviews that include ``while.'' It can be observed that the scores of the parent nodes of ``while'' is higher than average when ``while'' contains an adversative meaning, but lower otherwise. This observation holds across CNN, LSTM and BERT, with the sharpest distinction on BERT.

\begin{figure}[bt!]
\centering
\includegraphics[width=0.5\linewidth]{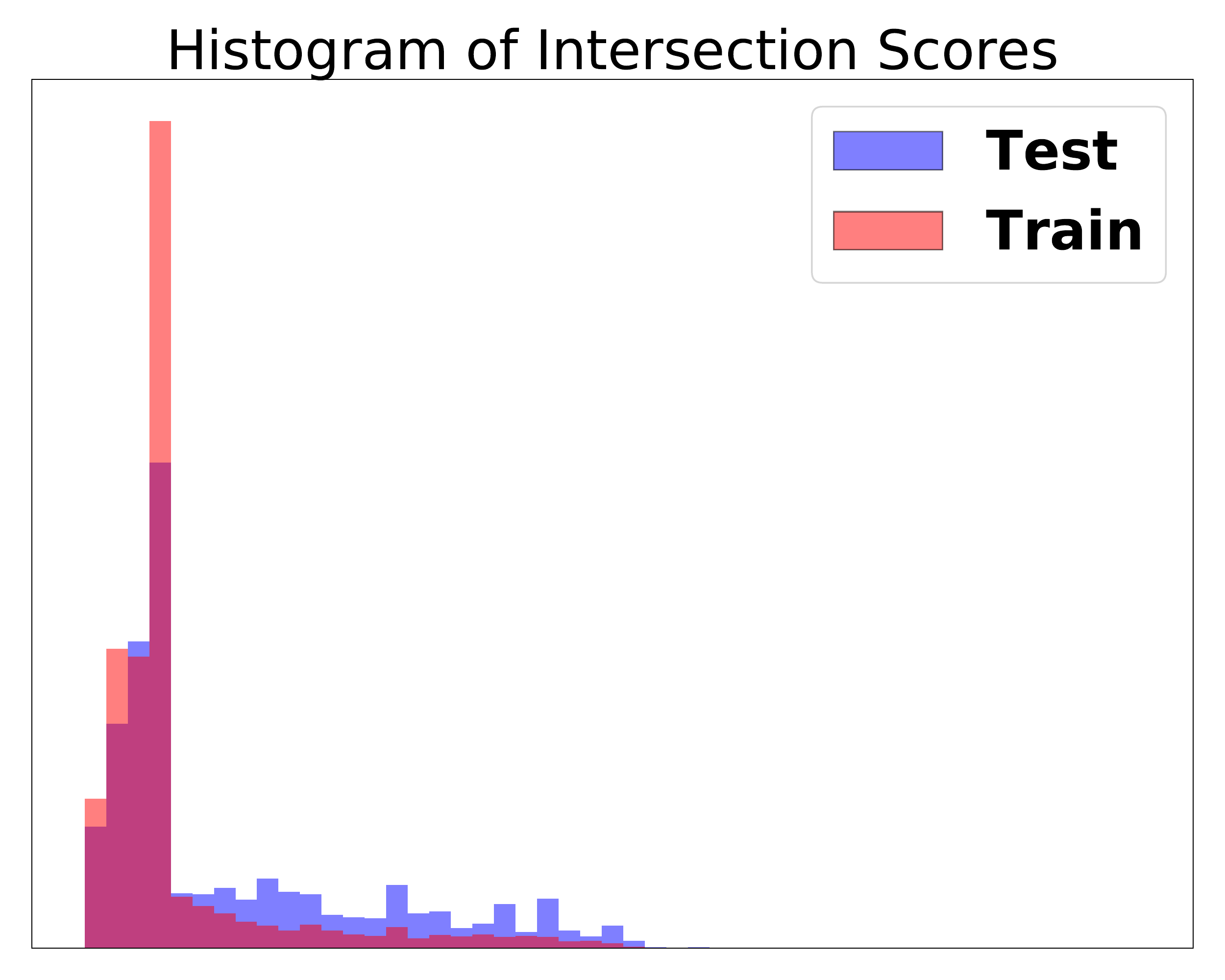}   
\caption{The histogram of intersection scores with train and test data for BERT on SST.}
\label{fig:hist} 
\end{figure}

\begin{figure}[bt!]
\centering
\includegraphics[width=0.33\linewidth]{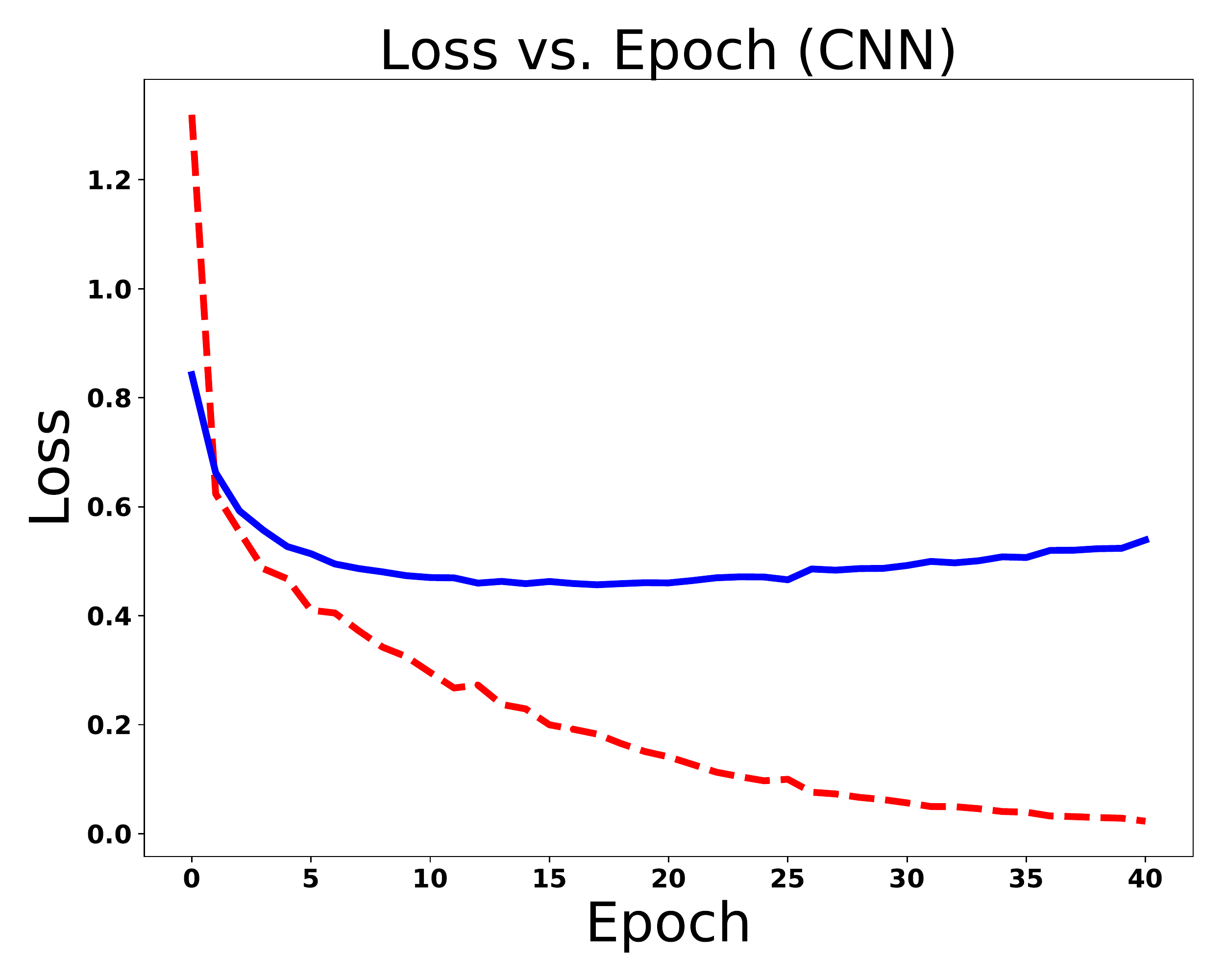} 
\includegraphics[width=0.33\linewidth]{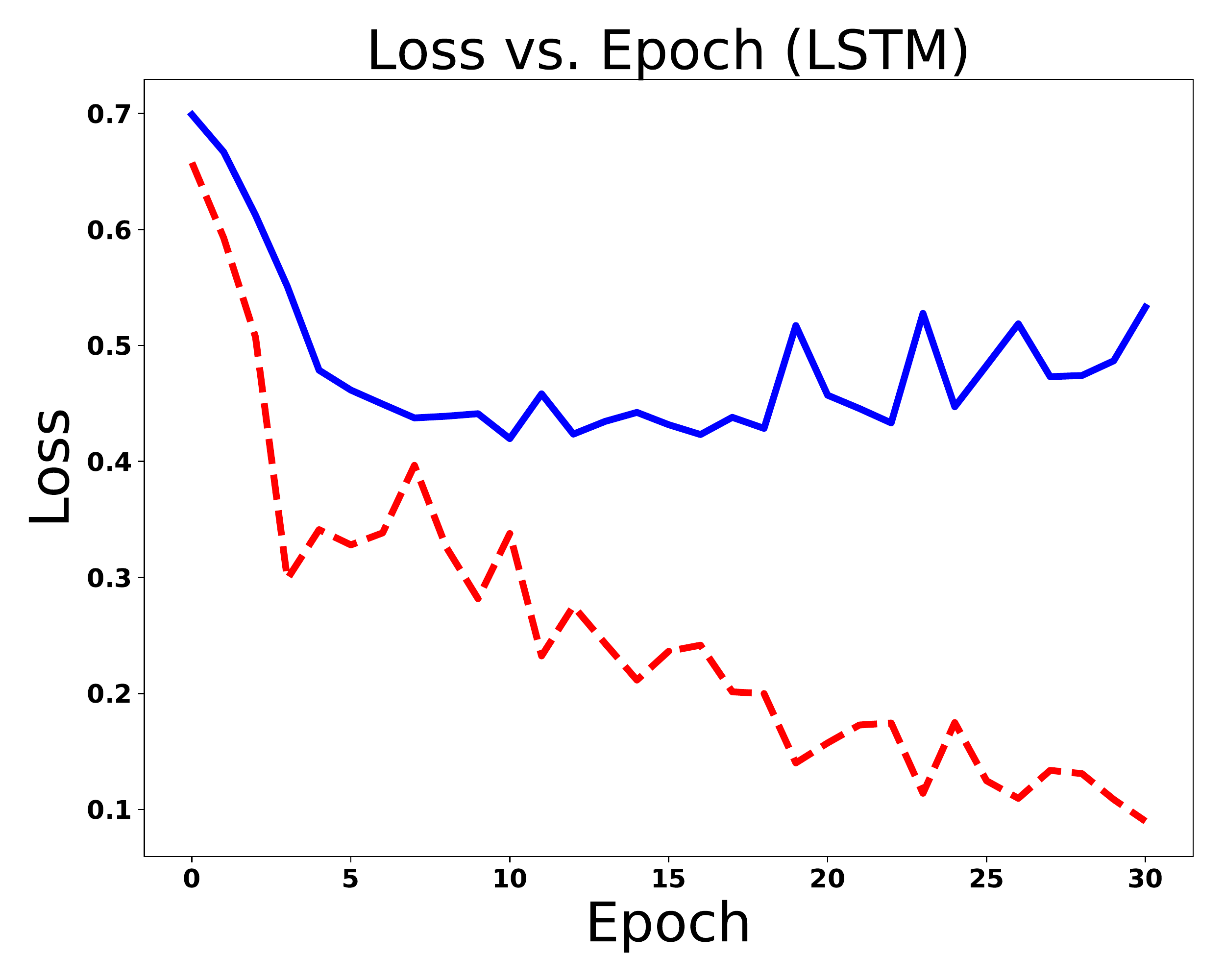} 
\includegraphics[width=0.33\linewidth]{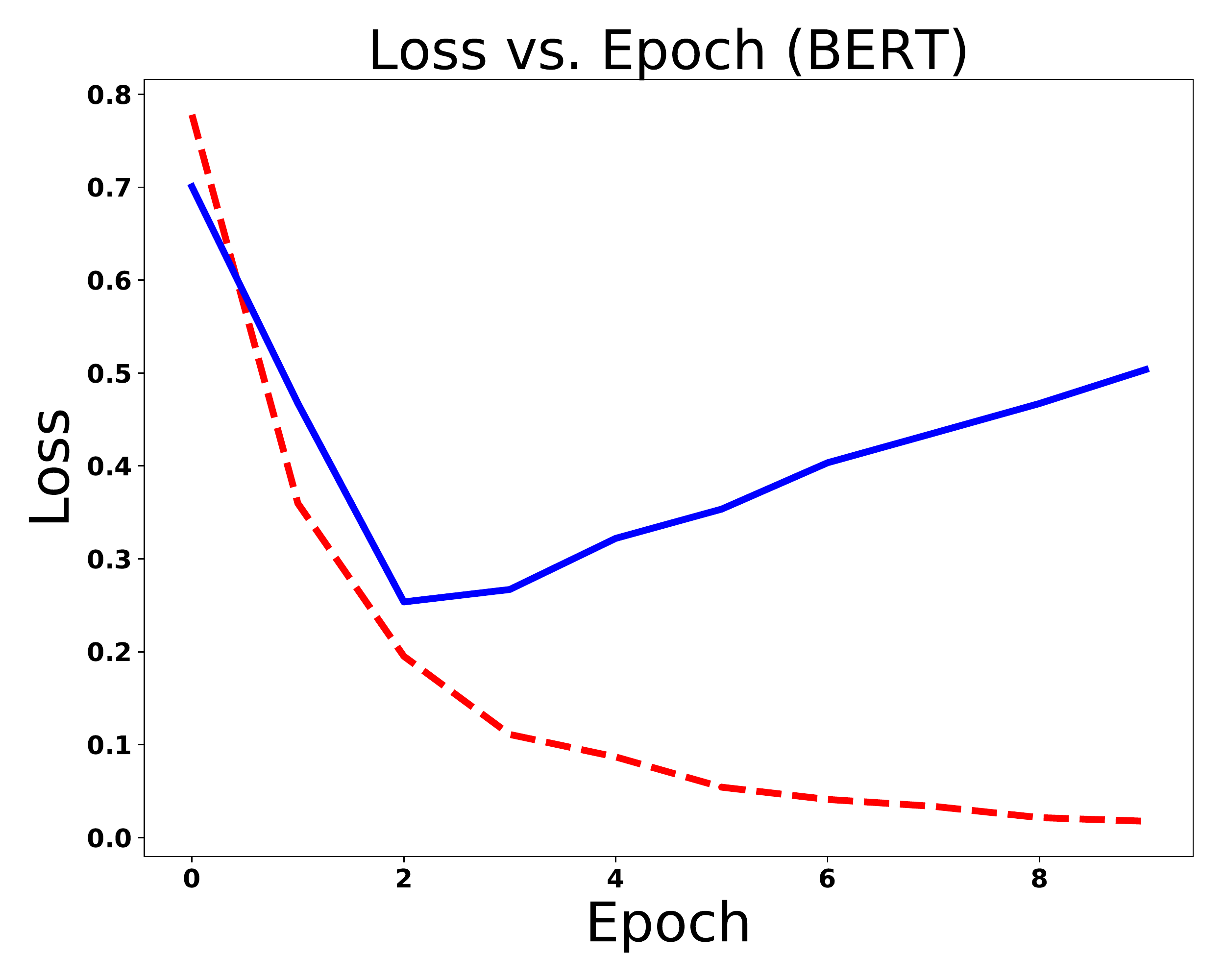} 
\includegraphics[width=0.33\linewidth]{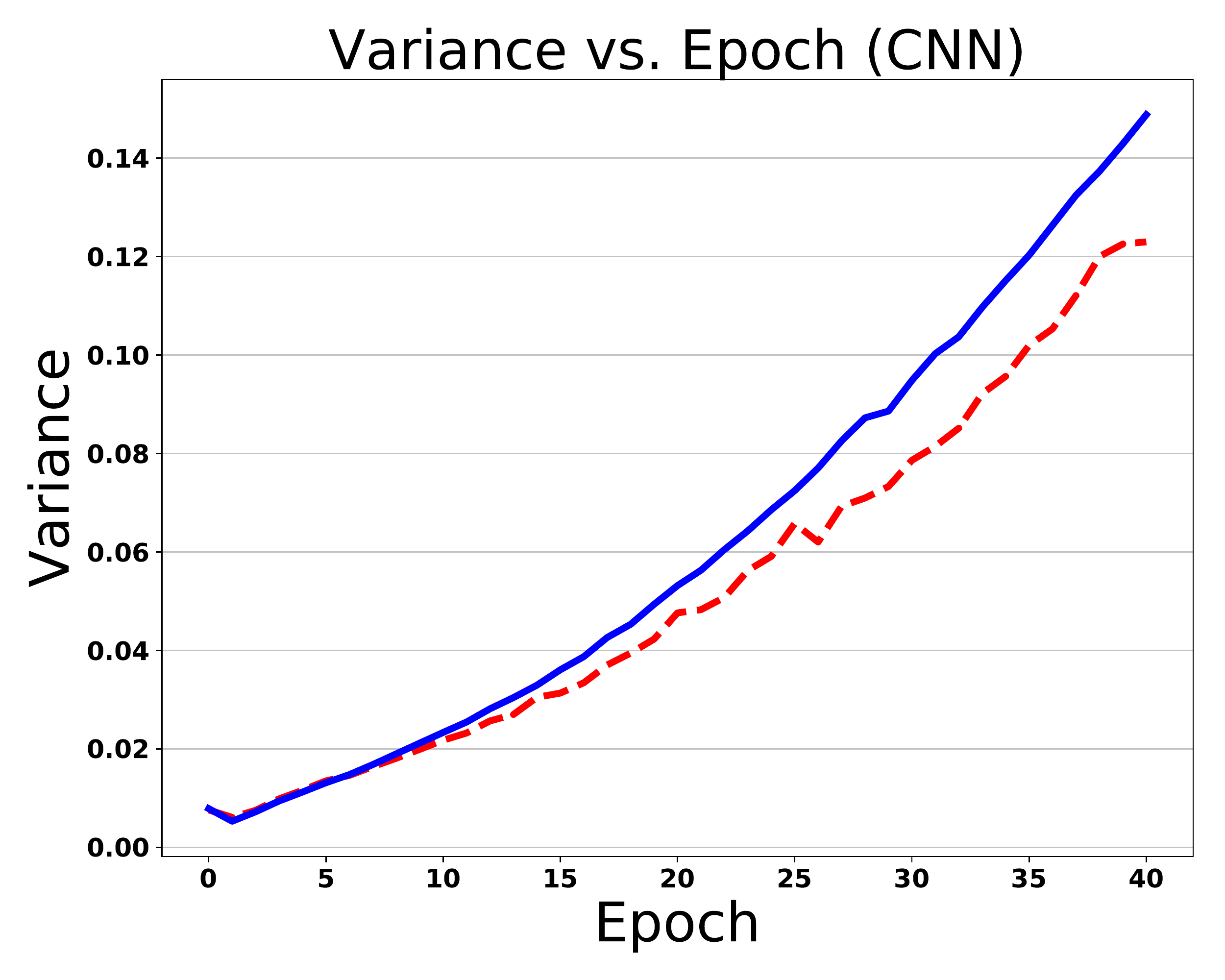} 
\includegraphics[width=0.33\linewidth]{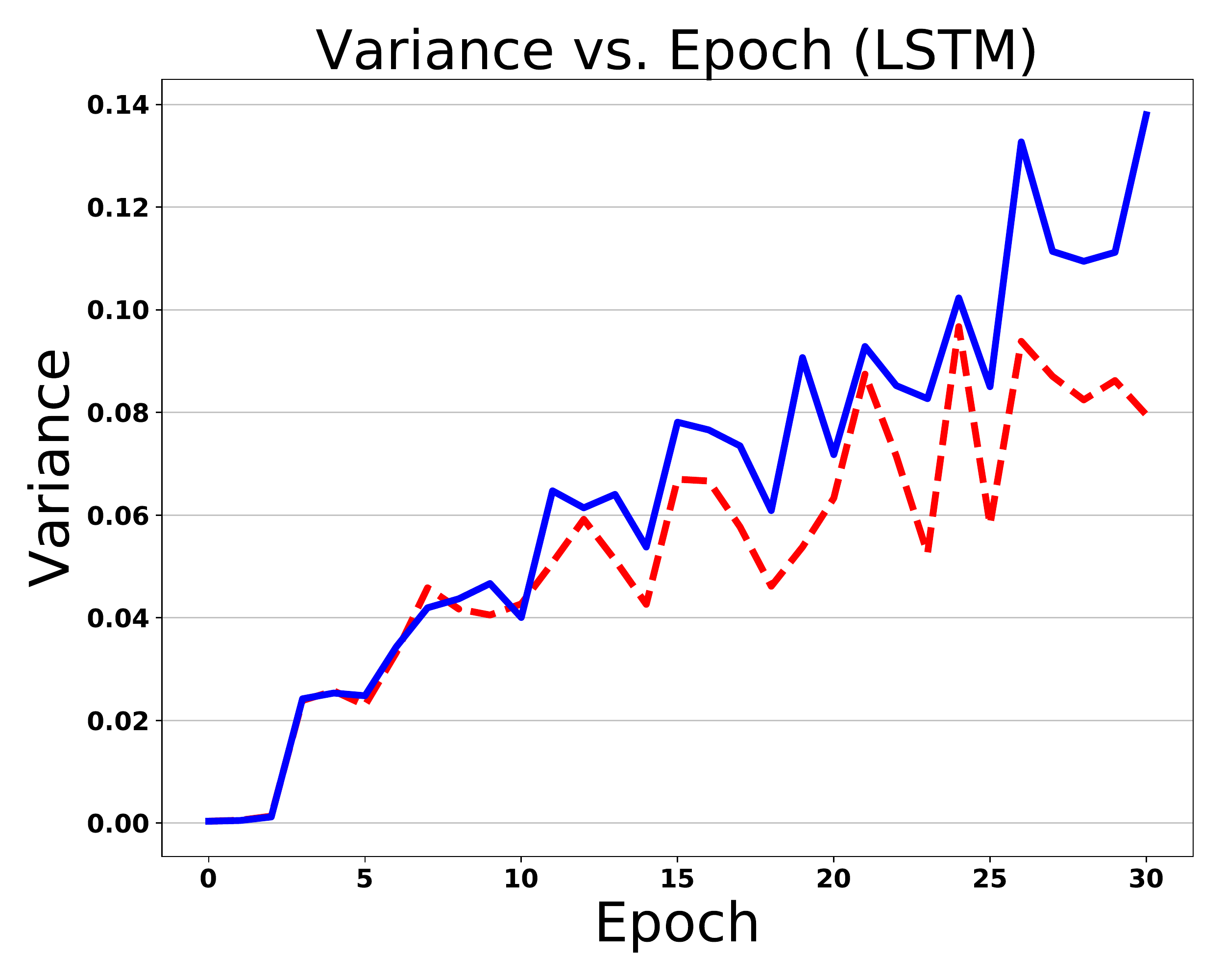} 
\includegraphics[width=0.33\linewidth]{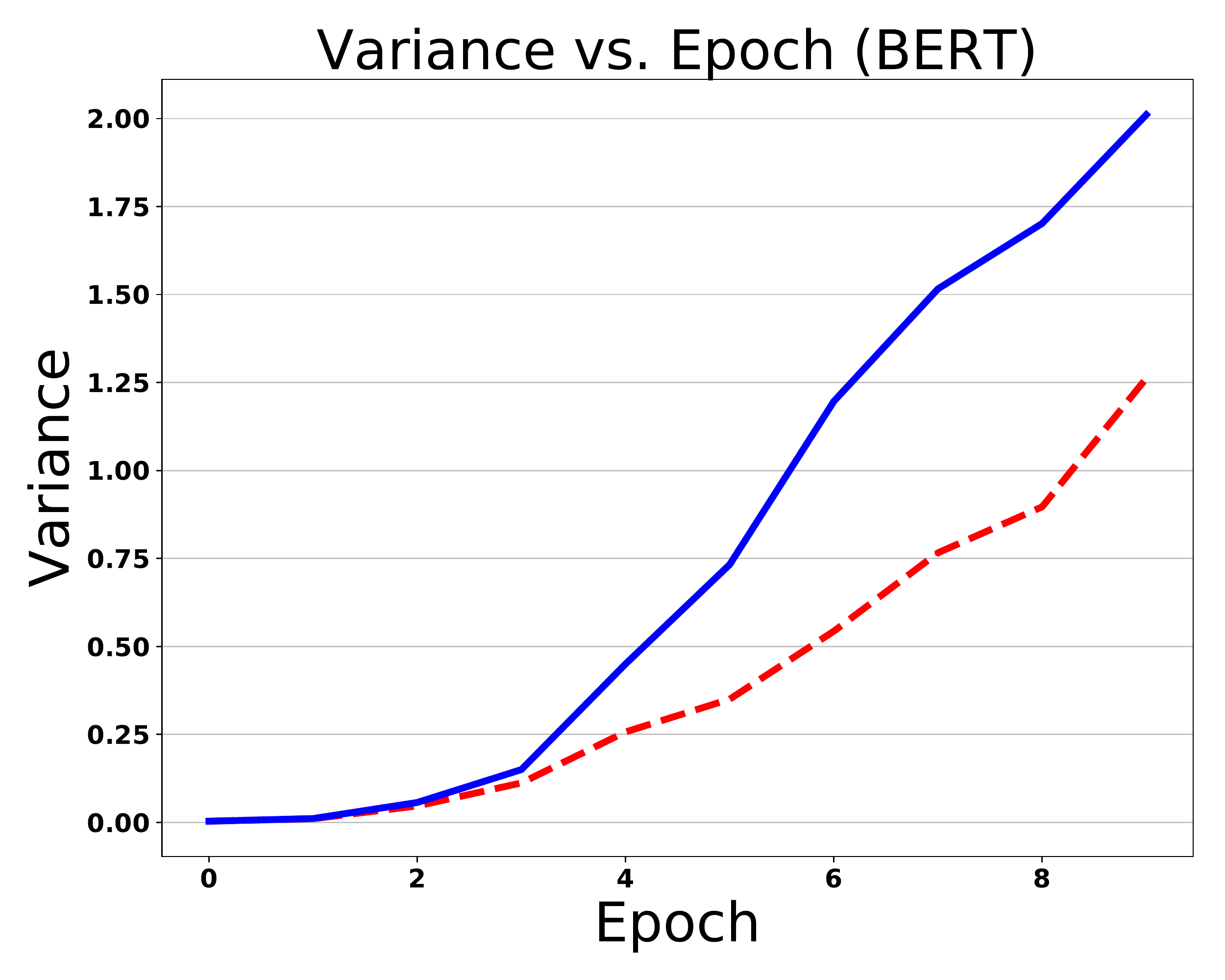} 
\includegraphics[width=0.33\linewidth]{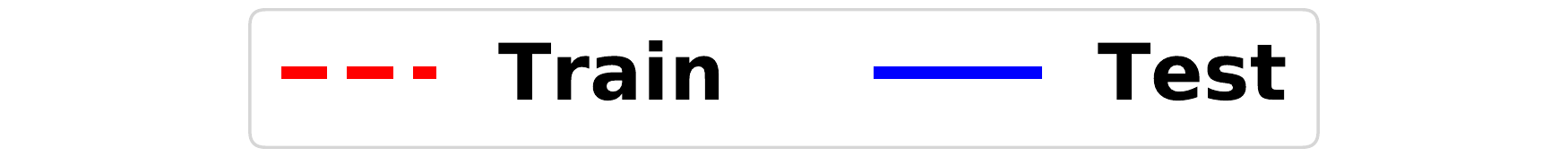} \\
\includegraphics[width=0.33\linewidth]{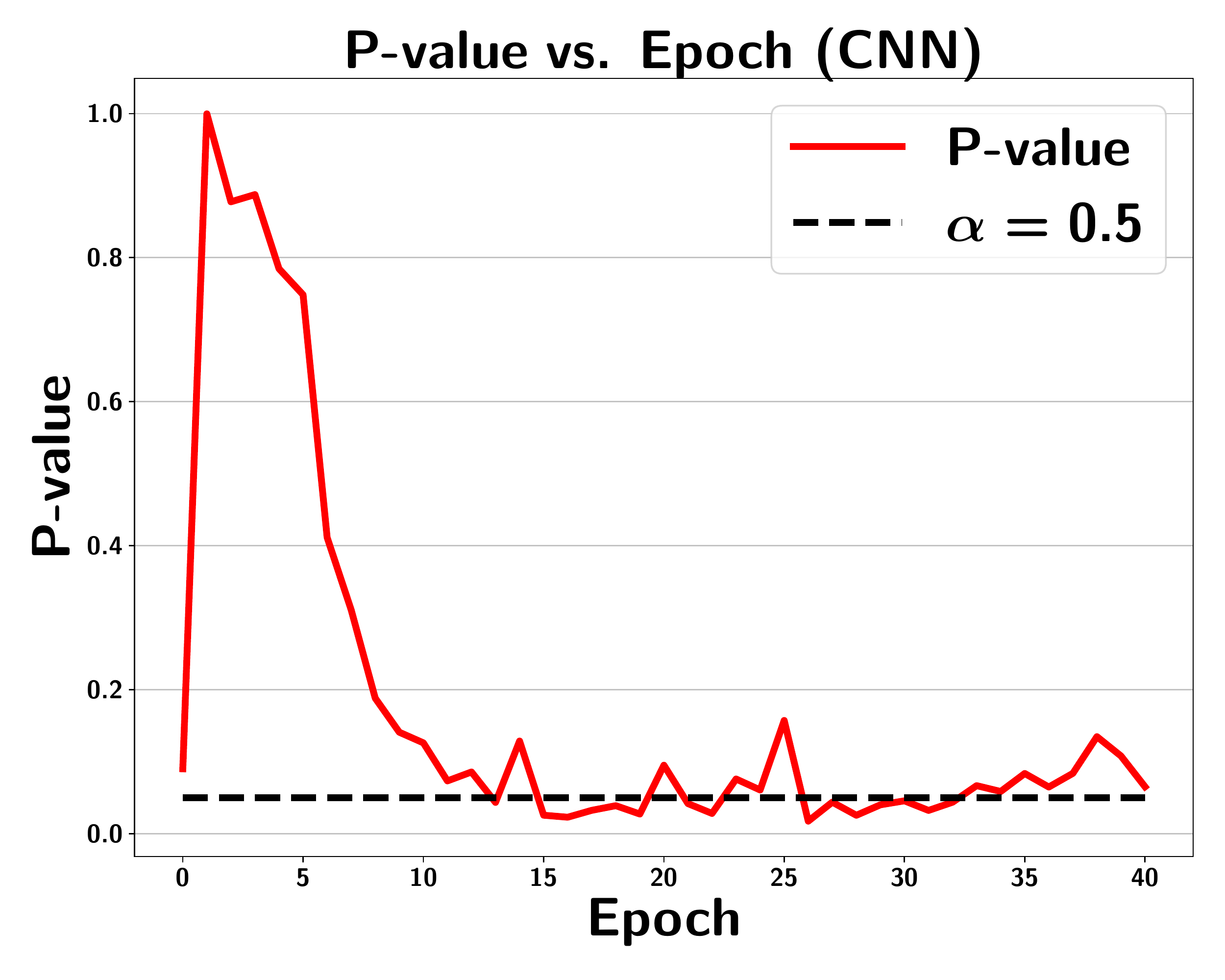} 
\includegraphics[width=0.33\linewidth]{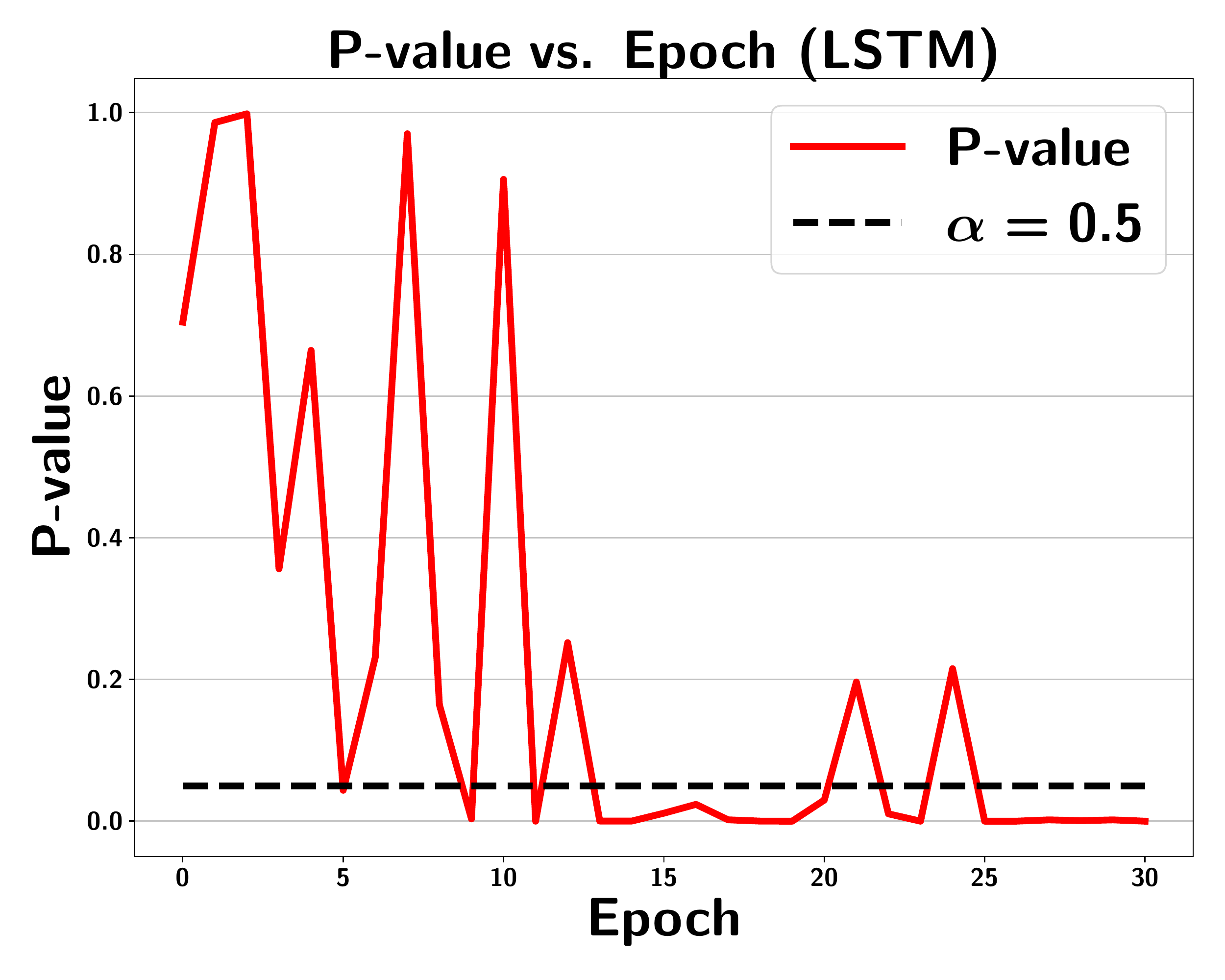} 
\includegraphics[width=0.33\linewidth]{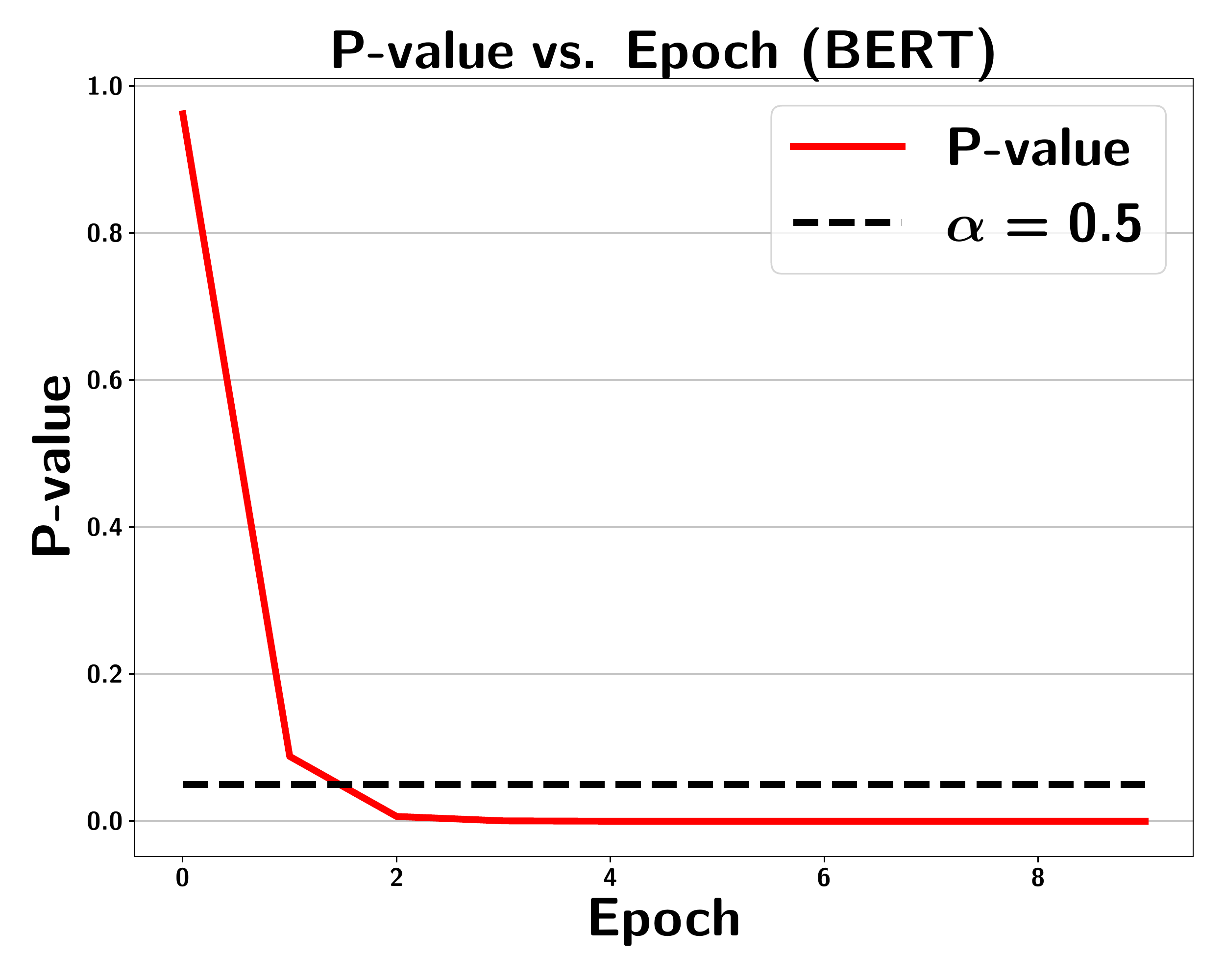} 

\caption{The three figures in Line 1 plot training and test loss of CNN, LSTM, BERT respectively. The figures in Line 2 plot the corresponding average variance of interaction scores across instances over training and test sets. The figures in Line 3 show p-values of permutation tests of $50,000$ iterations with $300$ randomly selected instances in training and test sets respectively.}
\label{fig:overfit} 
\end{figure}
\subsection{Detecting overfitting}
Overfitting happens when a model captures sampling noise in training data, while failing to capture underlying relationships between the inputs and outputs. Overfitting can be a problem in modern machine learning models like deep neural networks, due to their expressive nature. To mitigate overfitting, one often splits the initial training set into a training and a validation set, and uses the latter to obtain an estimate of the generalization performance \cite{larson1931shrinkage}. This leads to a waste of training data, depriving the model of potential opportunities to learn from the labelled validation data. We observe that the LS-Tree interaction scores can be used to construct a diagnostic for overfitting, one which is solely computed with unlabelled data. 

Figure~\ref{fig:hist} shows the histograms of absolute interaction scores on small subsets of training and test data of SST, for an overfitted BERT model. The scores are more spread out on test data than those on training data. In fact, we have observed this phenomenon holds true on average across instances for a overfitted model. 
In particular, interaction scores of test instances have a larger variance on average than those of training instances when the model is overfitted, but comparable otherwise. The observation can also be generalized to other types of neural networks, including CNN and LSTM. We show in Figure~\ref{fig:overfit} the average variance on training and test sets for CNN, LSTM and BERT models against training epochs, together with the loss curves. We observe that overfitting occurs when the variances between training and test sets differ.

The observation suggests we may use the difference of average variances of interaction scores between training and test sets as a diagnostic for overfitting. In particular, a permutation test can be carried out under the null hypothesis of equal average variance. The resulting p-values are plotted against the number of training epochs in the third line of Figure~\ref{fig:overfit}. It can be observed that p-values fall below the significance level of $0.05$ when overfitting occurs, which suggests the rejection of the null hypothesis as an early stopping criterion in training. 


\section{Discussion}

We have proposed the LS-Tree value as a fundamental quantity for interpreting NLP models.  This value leverages a constituency-based parser so that syntactic structure can play a role in determining  interpretations.  We have also presented an algorithm based on the LS-Tree value for detecting interactions between siblings of a parse tree. To the best of our knowledge, this is the first model-interpretation algorithm to quantify the interaction between words for arbitrary NLP models. We have applied the proposed algorithm to the problem of assessing the nonlinearity of common neural network models and the effect of adversative relations on the models. We have presented a permutation test based on the LS-Tree interaction scores as a diagnostic for overfitting.

{
\bibliography{ls_tree}
}

\bibliographystyle{plainnat}

\end{document}